\documentclass[letterpaper, 10 pt, conference]{ieeeconf}  

\IEEEoverridecommandlockouts                              

\usepackage{amsmath,amssymb}
\usepackage{amstext}

\usepackage{amsthm}
\usepackage{amsfonts}
\usepackage{mathtools}
\usepackage{thmtools}
\usepackage{thm-restate}
\usepackage{latexsym}
\usepackage{bm}
\usepackage{bbm}
\usepackage{dsfont}
\usepackage{nicefrac}
\usepackage[super]{nth}
\usepackage{booktabs}
\usepackage{multirow}
\usepackage{siunitx}
\usepackage{graphicx}
\usepackage{subcaption}
\usepackage{xcolor}
\usepackage[normalem]{ulem}
\usepackage[table]{xcolor}

\usepackage{algorithm}
\usepackage{algpseudocode}

\theoremstyle{plain}

\newtheorem{theorem}{Theorem}

\theoremstyle{definition}

\theoremstyle{remark}

\usepackage{algorithm}
\usepackage{algpseudocode}
\usepackage{amsmath}

\title{\LARGE \bf
Constrained Differentiable Policy Learning for Safe\\
Quadrotor Navigation
}

\author{
Yan Rui Tan$^{1*}$,
Praveen Krishnapur$^{1*}$,
Zeyu Feng$^{2*}$,
Han Bao$^{1}$,
Yew-Soon Ong$^{2}$,
Fan Shi$^{1}$
\thanks{$^{*}$These authors contributed equally to this work.}%
\thanks{$^{1}$Yan Rui Tan, Praveen Krishnapur, Han Bao,
and Fan Shi are with the National University of Singapore,
Singapore.
{\tt\small
\{yr.tan, bao\_han\}@u.nus.edu;
\{praveen\_krish, fan.shi\}@nus.edu.sg}}%
\thanks{$^{2}$Zeyu Feng and Yew-Soon Ong are with
the Agency for Science, Technology and Research (A*STAR),
Singapore.
{\tt\small
\{feng\_zeyu, Ong\_Yew\_Soon\}@a-star.edu.sg}}%
}

\begin{document}

\maketitle
\thispagestyle{empty}
\pagestyle{empty}

\begin{abstract}
Differentiable simulation enables efficient policy optimization through analytic gradients, yet safety is commonly incorporated as a weighted penalty, requiring manual tuning of the trade-off between task performance and constraint satisfaction. We instead formulate obstacle avoidance as an explicit constraint and train quadrotor policies using primal--dual gradient descent--ascent through a differentiable simulator. We introduce two key components to enable effective constrained training. First, we propose a bounded clearance-based constraint function that provides informative gradients around the safety boundary while preventing large violations from destabilizing primal--dual optimization. Second, we introduce log-mean-exp (LME) aggregation to smoothly emphasize critical per-step safety violations across a trajectory and training batch within a single constraint governed by one dual variable. Together, these components enable the desired safety level to be specified directly while balancing constraint satisfaction and task completion. We theoretically prove that the proposed optimization with differentiable simulation converges to feasible policy under mild assumptions. Experiments demonstrate improved safety--performance trade-offs over fixed-penalty and existing policy-learning baselines, as well as generalization to denser out-of-distribution environments. Finally, the learned policy transfers zero-shot to Gazebo/PX4 and a physical quadrotor without additional fine-tuning.
\end{abstract}

\section{INTRODUCTION}

Differentiable simulators expose analytic gradients of the dynamics, allowing policy objectives to be optimized directly by backpropagation through time (BPTT) rather than by high-variance score-function estimates \cite{xu2022accelerated, luo2025residual, xing2025stabilizing}. For quadrotors, whose nominal flight dynamics are smooth and contact-free, differentiable simulation provides an attractive alternative to model-free reinforcement learning, enabling agile flight and vision-based navigation with substantially fewer environment interactions \cite{zhang2025learning, su2026vector}.
This paradigm is particularly well suited to real world tasks including obstacle avoidance, allowing environmental vehicle--obstacle clearance to be differentiated directly with respect to the state.

\begin{figure}[t]
    \centering
    \includegraphics[width=0.9\columnwidth]{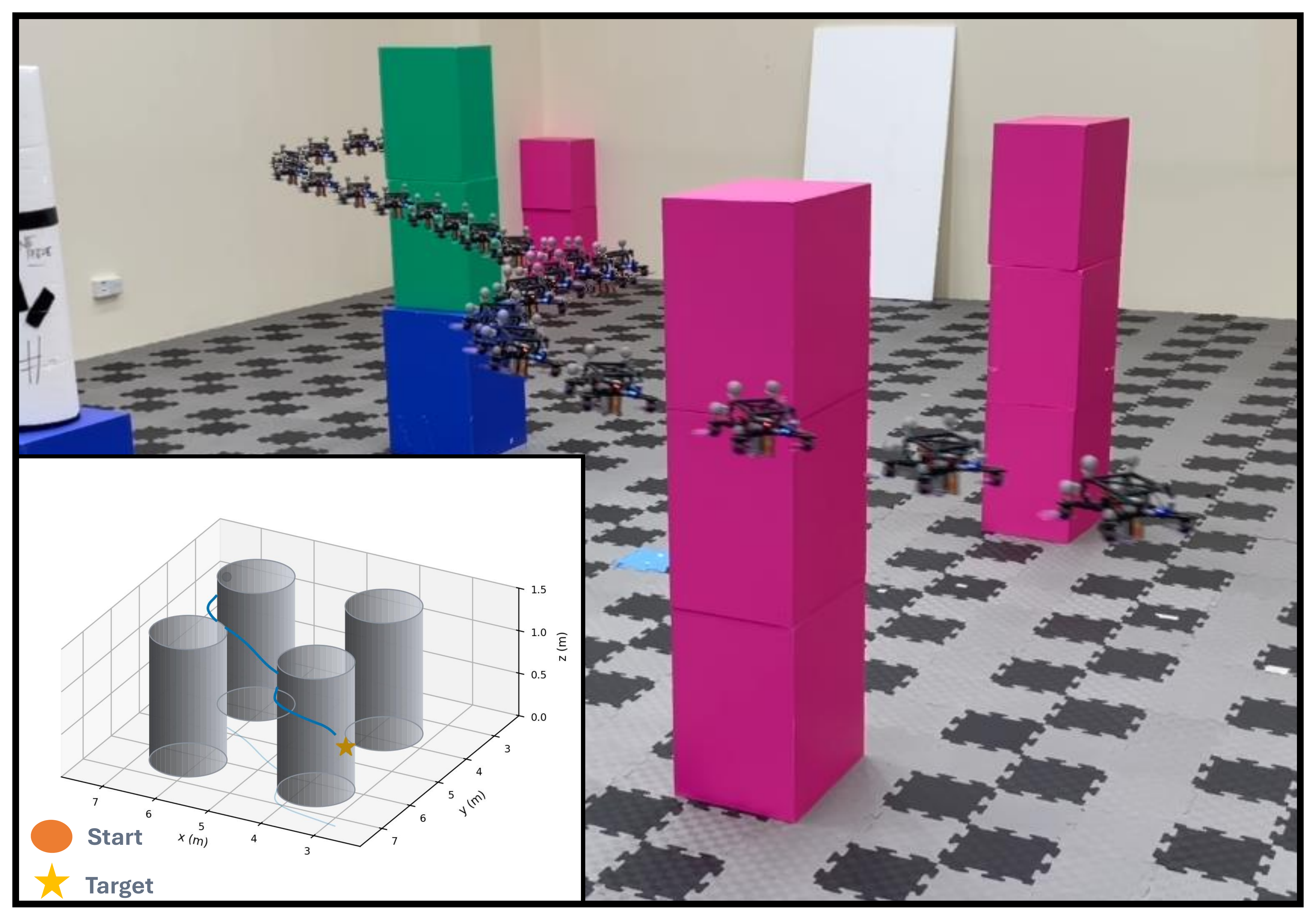}
    \caption{Zero-shot deployment of our primal--dual policy on a physical quadrotor. Due to practical limitations in reproducing the exact obstacle geometry, the physical obstacles serve as proxies for the virtual cylindrical obstacles perceived by the policy. The inset shows the corresponding trajectory in the policy's perceived environment.}
    \label{fig:your_figure}
\end{figure}

Yet safety is often incorporated into these objectives through a fixed weighted penalty \cite{zhang2025learning, deng2026autonomous}, coupling obstacle avoidance and task performance through a manually selected weight. A weak penalty may permit unsafe trajectories, whereas a strong penalty can compromise task performance; moreover, the appropriate weighting may vary across environments and tasks.
A constrained formulation offers a natural alternative by separating safety requirements from the task objective and explicitly specifying a desired constraint level. Unlike a fixed penalty, for which introducing such a threshold only shifts the objective by a constant, constrained optimization algorithms like primal--dual is capable of adaptively adjusting the penalty, allowing the specified constraint level to directly guide policy optimization.

Constrained policy optimization is well established in model-free safe reinforcement learning. While conventional methods for constrained Markov Decision Processes (CMDP) typically constrain
expected cumulative costs \cite{achiam2017constrained,stooke2020responsive},
instantaneous-constraint methods can provide safety guarantees at each decision
step under their respective assumptions \cite{shi2023near}. Such model-free
approaches, however, do not exploit analytic gradients through the dynamics and
are generally sample inefficient. More recently, CGPO extends constrained
optimization to differentiable simulation using analytic trajectory gradients
and adaptive trust-region updates to bound constraint violation
\cite{dai2024safe}. However, this does not guarantee strict feasibility, and
its finite-horizon cumulative constraint does not directly ensure safety at
each timestep. Since our training occurs entirely in simulation, intermediate
violations carry no physical consequence; instead, the key objective is to
guide the final policy toward a safety requirement that reflects critical
per-step obstacle avoidance while remaining compatible with analytic policy
gradients.

To this end, we adopt a primal--dual formulation and introduce two components
tailored to differentiable obstacle avoidance. First, we use log-mean-exp
(LME), a smooth approximation of the maximum per-step constraint cost, to
aggregate critical per-step safety into a single trajectory-level constraint
governed by one dual variable. Its differentiable form allows analytic safety
gradients to be backpropagated through the simulated dynamics. Second, we
introduce a bounded clearance-based constraint cost that provides informative
gradients near the safety boundary while limiting the influence of severe
obstacle penetration. Together, these components provide a smooth,
well-behaved constraint for differentiable primal--dual policy optimization.
Experimental results demonstrate the effectiveness of our approach and significant performance improvement in challenging out-of-distribution tests.

    
    

Our main contributions are:
\begin{itemize}
    \item A differentiable safety-constraint formulation combining a bounded clearance-based cost with LME aggregation for stable primal--dual policy optimization.
    
    \item A theoretical analysis on the convergence to feasible policy of our approach under mild assumptions.
    
    \item Simulation and real-world validation demonstrating favorable
    safety--performance trade-offs and zero-shot
    sim-to-real transfer.
\end{itemize}

\section{RELATED WORKS}

\subsection{Differentiable Policy Learning and Quadrotor Navigation}

Learning-based quadrotor navigation has been approached through imitation of
optimization-based or privileged experts
\cite{stevvsic2018sample, loquercio2021learning}, as well as direct policy
optimization \cite{yu2025depth}. Differentiable simulation provides an
increasingly attractive alternative by backpropagating first-order gradients
through simulated dynamics, enabling substantially more sample-efficient policy
optimization. SHAC addresses gradient degradation over long horizons through
truncated backpropagation and value-function bootstrapping
\cite{xu2022accelerated}, while subsequent work has considered partially
differentiable rewards \cite{li2025abpt}, optimization stability
\cite{xing2025stabilizing}, surrogate gradients \cite{heeg2025learning},
and online policy learning \cite{pan2026learning,ren2026learning}.

For quadrotor obstacle avoidance specifically, differentiable simulation has
enabled vision-based navigation policies using simplified vehicle dynamics
\cite{zhang2025learning,hu2025seeing}. While these methods demonstrate
efficient learning and successful navigation, obstacle avoidance is typically
incorporated through weighted objective terms, leaving the resulting
safety--performance trade-off dependent on the chosen weights. Our work
instead treats obstacle avoidance as an explicit constraint within
differentiable policy optimization.





\subsection{Constrained and Safe Policy Optimization}

Constrained policy optimization broadly follows two paradigms. Lagrangian
primal--dual methods, such as PPO/TRPO-Lagrangian, incorporate constraints
through adaptive dual variables, but can exhibit constraint overshoot and
oscillatory training, motivating improved dual updates such as PID-Lagrangian
\cite{zhang2020first,stooke2020responsive}. Alternatively, methods such as CPO
and FOCOPS explicitly constrain the policy update through trust-region or
policy-space optimization \cite{achiam2017constrained,zhang2020first}.
Regardless of the optimizer, conventional CMDP formulations commonly constrain
expected cumulative safety costs, which can permit localized violations along
a trajectory \cite{liu2020ipo,shi2023near}. Instantaneous-constraint methods have been proposed to enforce hard safety requirements at each decision step \cite{shi2023near}. However, these methods are not formulated for analytic policy optimization through differentiable dynamics.

More recently, CGPO combines analytic trajectory gradients with constrained
trust-region updates in differentiable simulation and establishes BPTT- and
SHAC-Lagrangian baselines \cite{dai2024safe}. CGPO constrains cumulative
finite-horizon safety cost, whereas our work focuses on the constraint
representation for obstacle avoidance, where localized violations may directly
correspond to collisions. We therefore use LME to construct a smooth,
worst-case-oriented trajectory constraint that emphasizes critical per-step
safety, together with a bounded clearance cost for stable primal--dual
optimization.

\subsection{Smooth Constraint Representations for Collision Avoidance}

Smooth approximations of minimum and maximum operators, particularly
log-sum-exp (LSE), have been used to combine multiple safety constraints
into differentiable representations. Molnar and Ames use smooth composition
to construct a single control barrier function for complex safety
specifications \cite{molnar2023composing}, while related formulations use
LSE to construct smooth collision-avoidance constraints for polygonal
geometry \cite{wu2025optimization}. More recently, Wang et al. employ an
LSE-aggregated composite barrier in safe reinforcement learning to obtain
a closed-form safety layer for multiple constraints
\cite{wang2025multi}. Our formulation instead employs log-mean-exp (LME), a
normalized form of LSE that preserves its smooth-max behavior while reducing
dependence on the number of aggregated terms. We use LME to aggregate per-step
safety costs into a single differentiable trajectory-level constraint for
primal--dual policy optimization. Combined with our bounded clearance-based cost, this yields a differentiable
constraint representation tailored to primal--dual policy learning for
obstacle avoidance.

\section{PROBLEM STATEMENT}

\subsection{Problem Formulation}

We consider the task of velocity tracking toward a target while avoiding
obstacles. To isolate the incorporation of safety requirements into policy
optimization from the separate problem of state estimation, we formulate the
task with the vehicle state, target position, and scene geometry available to
the policy during training and deployment. These quantities may be obtained
through external or onboard sensing and estimation systems.

\paragraph{System and Policy.}
The vehicle state is described by its position $p_t \in \mathbb{R}^3$,
velocity $v_t \in \mathbb{R}^3$, unit quaternion $q_t \in \mathbb{R}^4$,
and body rates $\omega_t \in \mathbb{R}^3$, collected as
\[
x_t = (p_t,v_t,q_t,\omega_t), \qquad \|q_t\|_2 = 1.
\]
Given a target position $p_\star\in\mathbb{R}^3$ and scene geometry
$\mathcal{S}$, the policy $\pi_\theta$ maps the current state and task
information to a collective thrust and body-rate command,
\[
u_t
=
\pi_\theta(x_t,p_\star,\mathcal{S})
=
(c_t,\omega_t^{\mathrm{cmd}})
\in\mathcal{U},
\]
where $c_t\geq0$ is the mass-normalized collective thrust and
$\omega_t^{\mathrm{cmd}}\in\mathbb{R}^3$ is the commanded body rate.

We model the vehicle as a rigid body with dynamics
\begin{equation}
\dot{p}=v,\qquad
\dot{v}=g e_3+c_t R e_3,\qquad
\dot{R}=R[\omega_t^{\mathrm{cmd}}]_\times,
\label{eq:dynamics}
\end{equation}
where $e_3=[0,0,1]^\top$ and $[\cdot]_\times$ denotes the skew-symmetric
matrix. We model the quadrotor as a rigid body and propagate its state using forward Euler integration with time step $\Delta t$. We assume that the commanded body rate is achieved over the subsequent simulation step. 

In our setting, we assume the scene geometry
$\mathcal{S}$ — the positions and radii of all obstacles — is known a priori.

\paragraph{Primal-Dual Optimization}
For a fixed initial state, target position, and scene, the policy induces an
$T$-step rollout. Sampling these quantities from a distribution $\rho$, we
define the task objective
\begin{equation}
J(\theta)
=
\mathbb{E}_{\rho}
\left[
\sum_{t=0}^{T-1}
\ell_t(x_t,u_t)
\right],
\label{eq:task_objective}
\end{equation}
where $\ell_t$ is the per-step cost.

Rather than treating obstacle violations as a weighted task penalty, we formulate safety as an explicit constraint.
\begin{equation}
\min_{\theta}\;J(\theta)
\qquad
\text{s.t.}
\qquad
C(\theta)\leq\alpha,
\label{eq:constrained_problem}
\end{equation}
where $C(\theta)$ denotes the safety cost induced by the resulting rollouts
and $\alpha$ is the maximum allowable cost. The corresponding Lagrangian is
\begin{equation}
\mathcal{L}(\theta,\lambda)
=
J(\theta)
+
\lambda\left(C(\theta)-\alpha\right),
\qquad
\lambda\geq0.
\label{eq:lagrangian}
\end{equation}
We optimize the policy parameters and the dual variable using
\begin{align}
\theta
&\leftarrow
\theta-\eta^\theta_k
\nabla_\theta\mathcal{L}(\theta,\lambda),
\label{eq:primal_update}
\\
\lambda
&\leftarrow
\left[
\lambda+\eta^{\lambda}_k
\left(C(\theta)-\alpha\right)
\right]_+,
\label{eq:dual_update}
\end{align}

where $[\cdot]_+$ ensures the dual variable remains nonnegative. $\eta^\theta_k$ and $\eta^\lambda_k$ are the respective step size.

\section{PRIMAL DUAL UPDATES WITH DIFFERENTIABLE SIMULATOR}

\subsection{Backpropagation Through  (BPTT)}

The differentiable rollout allows the gradient of the Lagrangian, Eqn. \ref{eq:lagrangian} to be computed by backpropagating through the policy and vehicle dynamics
over the rollout. In particular, the gradient is obtained by applying the
chain rule to the dependence of the Lagrangian on the sequence of states and
actions,
\begin{equation}
\nabla_\theta \mathcal{L}
=
\sum_{t=0}^{T-1}
\left(
\frac{\partial \mathcal{L}}{\partial x_t}
\frac{\partial x_t}{\partial\theta}
+
\frac{\partial \mathcal{L}}{\partial u_t}
\frac{\partial u_t}{\partial\theta}
\right),
\label{eq:bptt}
\end{equation}
where the state sensitivities $\partial x_t/\partial\theta$ are recursively
propagated through the dynamics (Eqn. \ref{eq:dynamics}) and the policy
$u_t=\pi_\theta(x_t,p_{*},\mathcal{S})$. 

\subsection{Task Objective Loss Function}
\label{sec:loss}

The task objective $J(\theta)$ measures velocity-tracking performance and
flight smoothness, without incorporating obstacle-related terms. Safety is
instead enforced separately through the constraint $C(\theta)$ introduced in
Section~\ref{sec:constraint_loss}. The reference velocity is generated geometrically
from the target position. Let
$e_t=p_\star-p_t$ denote the position error and
$\rho_t=\lVert e_t^{xy}\rVert_2$ its horizontal magnitude. We define the
horizontal reference velocity as
\begin{equation}
v_t^{\star,xy}
=
\frac{e_t^{xy}}{\rho_t+\epsilon}
v_{\max}
\operatorname{clip}\!\left(\frac{\rho_t}{\rho_0},0,1\right)
\min\!\left(\frac{t\Delta t}{T_r},1\right),
\label{eq:vref_xy}
\end{equation}
where $v_{\max}$ limits the commanded speed, $\rho_0$ reduces the reference
speed near the target, and $T_r$ smoothly ramps the command from rest.
Altitude is controlled independently using
\begin{equation}
v_t^{\star,z}
=
\operatorname{clip}
\left(k_{\mathrm{alt}}e_t^z,-v_{\max}^z,v_{\max}^z\right).
\label{eq:vref_z}
\end{equation}

The task objective combines velocity tracking, body-rate regularization, and
command smoothness respectively. Specifically, the per-step costs are
\begin{equation}
    \ell_t^{\mathrm{v}} =
w_v\left\lVert v_{t+1}-v_t^\star\right\rVert_2^2
\end{equation}
\begin{equation}
\qquad
\ell_t^{\mathrm{r}}
=
w_r\left\lVert\omega_{t+1}\right\rVert_2^2,
\qquad
\ell_t^{\mathrm{a}}
=
w_a\left\lVert a_t-a_{t-1}\right\rVert_2^2,
\label{eq:per_step}
\end{equation}
where $v_t^\star=(v_t^{\star,xy},v_t^{\star,z})$. The reference is computed
from the pre-step state, while velocity and body rates are evaluated after the
step, so each term measures the outcome of the corresponding action. The
resulting task objective is
\begin{equation}
J(\theta)
=
\frac{1}{BT}
\sum_{i=1}^{B}
\sum_{t=0}^{T-1}
\left[
\ell_t^{\mathrm{v},i}
+
\ell_t^{\mathrm{r},i}
+
\ell_t^{\mathrm{a},i}
\right]
\label{eq:task_loss}
\end{equation}
where $B$ is the number of parallel environments. The three terms respectively
encourage accurate velocity tracking, moderate body rates, and smooth control commands. 

Importantly, the weights $w_v$, $w_r$, and $w_a$ only balance different aspects of flight quality. In contrast, obstacle avoidance is represented by the separate constraint $C(\theta)\leq\alpha$, whose multiplier is adjusted during optimization rather than manually weighted against the task objective.

\begin{figure*}[t]
    \centering
    \includegraphics[width=\textwidth]{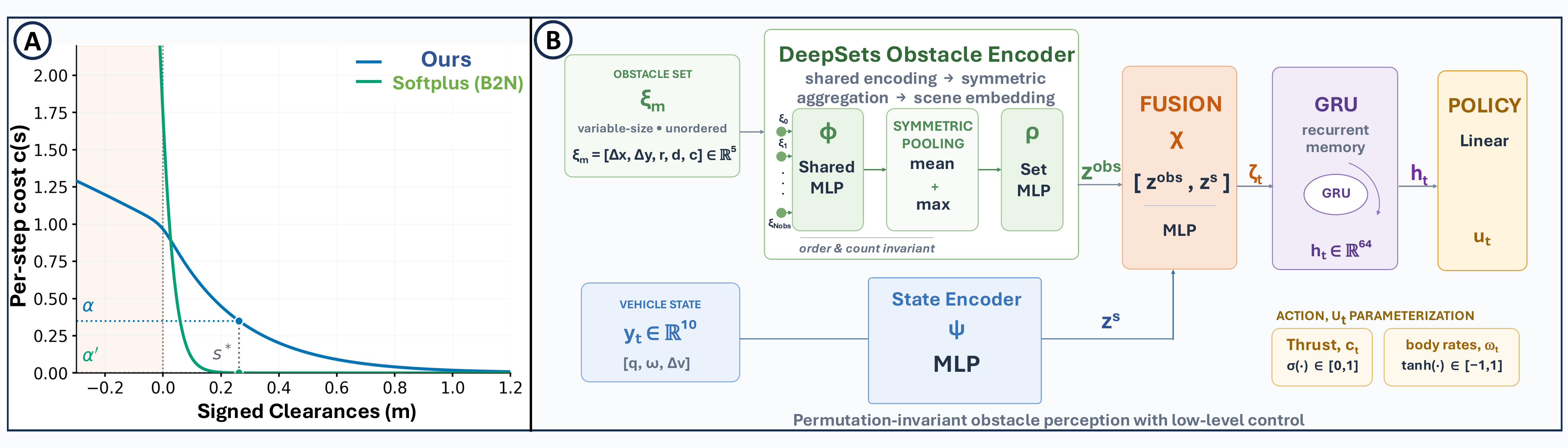}
    \caption{\textbf{A.} Comparison of our proposed bounded constraint cost with a conventional unbounded signed-distance-based cost proposed in \cite{zhang2025learning}. \textbf{B.} Our permutation-invariant policy architecture for mapping vehicle states and variable obstacle configurations to control actions.}
    \label{fig:policy_architecture}
\end{figure*}

\subsection{Constraint Loss Function}
\label{sec:constraint_loss}

Obstacle avoidance requires safety to be captured at each timestep, since even
a localized violation may result in collision. We therefore first construct a
smooth per-step safety cost from the vehicle--obstacle clearance. Let
$\mathcal{O}_m$ denote the geometry of obstacle $m$. Its signed clearance from
the vehicle at state $x_t$ is
\begin{equation}
s_{t,m} = s(x_t,\mathcal{O}_m),
\end{equation}
where $s_{t,m}>0$ indicates separation, $s_{t,m}=0$ the obstacle boundary,
and $s_{t,m}<0$ penetration. For multiple obstacles, we consider the most
critical obstacle at each timestep and define
\begin{equation}
s_t = \min_m s_{t,m}.
\label{eq:min_clearance}
\end{equation}
The resulting per-step constraint cost is then $c_t=c(s_t)$.

The choice of $c(\cdot)$ directly affects the gradients available during
primal--dual training. We seek a cost that provides informative gradients as
the vehicle approaches the safety boundary while remaining bounded under
severe penetration. We define
\begin{equation}
c(s)
=
\left[
1+(c_{\max}-1)
\tanh\!\left(K\,\mathrm{S}(-s;\kappa_s)\right)
\right]
\exp\!\left(-\beta\,\mathrm{S}(s;\kappa_s)\right),
\label{eq:cost_fn}
\end{equation}

where
\begin{equation}
\mathrm{S}(x;\kappa_s)
=
\frac{1}{\kappa_s}
\log\left(1+\exp(\kappa_s x)\right)
\end{equation}
is a smooth approximation of $\max(0,x)$. Here, $\beta$ controls the decay
of the cost with increasing clearance, $K$ controls the saturation rate under
obstacle penetration, $\kappa_s$ controls the smoothness of the approximation,
and $c_{\max}$ bounds the cost under severe penetration. Consequently, the
cost provides a continuous safety signal before collision while preventing
large violations from dominating the optimization gradient. The resulting constraint profile is illustrated in Fig.~\ref{fig:policy_architecture}A.

This produces a sequence of per-step safety costs
$\{c_t^i\}_{t=0}^{T-1}$ for each rollout $i$. Rather than using a cumulative constraint, we apply log-mean-exp (LME), a
smooth and differentiable approximation of the maximum, over time and across
the batch to emphasize high-cost events while obtaining a single scalar
constraint for the primal--dual update:
\begin{align}
C^i
&=
\frac{1}{\kappa_T}
\log
\sum_{t=0}^{T-1}
\exp\!\left(\kappa_T c_t^i\right)
-
\frac{\log T}{\kappa_T},
\label{eq:lse_time}
\\
C(\theta)
&=
\frac{1}{\kappa_E}
\log
\sum_{i=1}^{B}
\exp\!\left(\kappa_E C^i\right)
-
\frac{\log B}{\kappa_E}.
\label{eq:lse_env}
\end{align}
The normalization terms prevent the aggregate
from scaling with the trajectory length $T$ or batch size $B$.
The temperatures $\kappa_T$ and $\kappa_E$ control the degree of concentration,
interpolating from the mean toward the maximum as they increase.

\subsection{Policy Architecture}

The policy processes the vehicle state and obstacle configuration using separate
encoders. The vehicle observation is
\begin{equation}
y_t =
\left[
q_t,\;
\omega_t,\;
v_t^\star-v_t
\right]
\in\mathbb{R}^{10},
\label{eq:obs_state}
\end{equation}
where $q_t$ denotes the attitude quaternion. Absolute position and
target location are omitted, making the observation independent of the global
task location. Each obstacle is represented relative to the vehicle as
\begin{equation}
\xi_m =
\frac{1}{D}
\left[
r_m,\;
R_m,\;
\rho_m,\;
s_m
\right]
\in\mathbb{R}^{5},
\qquad
\label{eq:obs_set}
\end{equation}
where $r_m$ is the horizontal displacement to the $m^{th}$ obstacle centre,
$R_m$ its radius, $\rho_m=\|r_m\|_2$, and
$s_m=\rho_m-R_m$ the corresponding clearance. Padding allows scenes with varying numbers of obstacles to be processed in fixed-size batches.

To make the obstacle representation invariant to obstacle ordering, we use a permutation-invariant Deep Sets encoder \cite{zaheer2017deep}. A shared network
$\phi:\mathbb{R}^{5}\rightarrow\mathbb{R}^{d_o}$ maps each obstacle to a
feature, which is aggregated using mean and max pooling:
\begin{align}
\bar{z} &=
\frac{1}{n}\sum_m \phi(\xi_m),
&
\hat{z} &=
\max_{m:\mu_m=1}\phi(\xi_m),
\\
z^{\mathrm{obs}}
&=
\varrho\left(
\left[
\bar{z},\hat{z},\log(1+n)
\right]
\right),
\label{eq:pooling}
\end{align}
where $n=\max(\sum_m1)$ and $\varrho$ is an MLP. The symmetric pooling
ensures that the representation does not depend on obstacle ordering, while
the count feature retains information about the number of visible obstacles.

The vehicle and obstacle embeddings are then fused and processed by a recurrent
policy. Specifically, the state observation is embedded by
$\psi:\mathbb{R}^{10}\rightarrow\mathbb{R}^{d_s}$ and concatenated with
$z_t^{\mathrm{obs}}$, after which an MLP $\chi$ produces the latent feature
$\zeta_t$. A GRU maintains the recurrent state
\begin{equation}
h_t =
\mathrm{GRU}(\zeta_t,h_{t-1}),
\qquad
h_{-1}=\mathbf{0},
\end{equation}
which provides temporal memory for more effective sim-to-real deployment. This is followed by a linear head,  $\mathcal{H}$, that outputs the four-dimensional CTBR command. 
\begin{equation}
u_t
=
\left[
c_t,\,
\omega_t^{\mathrm{cmd}}
\right]
=
\mathcal{H}(h_t)
\in\mathcal{U},
\label{eq:policy_output}
\end{equation}
The overall policy architecture is shown in Fig. \ref{fig:policy_architecture}B.


\subsection{Algorithm and Feasibility Convergence}
\label{ssec:proof}

Summarizing the above methodology, we have proposed a primal dual algorithm for optimizing a novel constrained finite-horizon decision-making problem under BPTT. Next, we address the challenge of establishing the convergence and feasibility of the proposed primal-dual algorithm. Unlike classical approaches for CMDPs that usually conduct approximate policy gradient updates, our algorithm relies on deterministic backpropagation through time to compute the gradients of a cumulative reward objective and a general constraint function w.r.t. the deterministic policy parameters. The following theorem show that our algorithm is guaranteed to converge to policy that satisfies the constraints, ensuring feasibility while optimizing the objective function for sim-to-real deployment.

\begin{restatable}{theorem}{informalthm}
\label{thm:convergence_feasibility}
    Let the update rule from Eq.~(\ref{eq:primal_update}) and (\ref{eq:dual_update}) at iteration step $k\in\mathbb N_0$ be given by
    \begin{align}
        & \theta_{k+1} = \theta_k - \eta_k^{\theta} \left( \nabla_\theta J(\theta_k) + \lambda_k\nabla_\theta C(\theta_k) \right) \\
        & \lambda_{k+1} = \left[ \lambda_k + \eta_k^{\lambda} \left(C(\theta_k) - \alpha\right) \right]_+,
    \end{align}
    where $\eta_k^{\theta}$ and $\eta_k^{\lambda}$ follow the Robbins-Monro stepsizes, i.e., $\sum_{k=0}^{\infty} \eta_k^{\theta}=\infty$, $\sum_{k=0}^{\infty} \eta_k^{\lambda}=\infty$, $\bar\rho_n \coloneq \sup_{k\ge n} \nicefrac{\eta_k^{\lambda}}{\eta_k^{\theta}} \rightarrow0$ and $\eta_k^{\theta} \left( 1+\sum_{j=0}^{k-1}\eta_j^{\lambda} \right) \rightarrow0$.
    Assume that there exists a compact convex set $\Theta \subset\mathbb R^p$ such that $\forall k$, $\theta_k\in\Theta$, and there exist constants $\delta>0$ and $\nu>0$ such that $\tilde{C}(\theta)\ge-\delta \;\; \Rightarrow \;\; \|\nabla C(\theta)\|\ge\nu$, then during optimization $\lambda_k$ remains bounded $\lambda_k\in[0,\Lambda_0]$, and we can obtain an asymptotically feasible policy such that $\forall\varepsilon>0$, $\forall n,$ $\exists k\ge n: C(\theta_k)\le\alpha+\varepsilon$.
\end{restatable}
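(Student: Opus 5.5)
The argument treats the updates as a two-timescale system in which $\theta$ moves fast and $\lambda$ moves slowly. Throughout, $\tilde C(\theta)\coloneq C(\theta)-\alpha$ (the paper's notation in the assumption). Compactness of $\Theta$ and smoothness of the rollout map (the dynamics in Eq.~(\ref{eq:dynamics}), the smooth cost in Eq.~(\ref{eq:cost_fn}), and the LME aggregates in Eqs.~(\ref{eq:lse_time})--(\ref{eq:lse_env})) give uniform constants on $\Theta$. These are $|J|,|C|\le M$, $\|\nabla J\|\le G_J$, $\|\nabla C\|\le G_C$, and $L$-Lipschitz gradients of $J$ and $C$. The proof then has two parts: first that $\lambda_k$ stays bounded, then asymptotic feasibility by contradiction.

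\emph{Step 1 (boundedness of $\lambda_k$).} The dual update gives the crude bound $\lambda_k\le \lambda_0+(M+|\alpha|)\sum_{j<k}\eta_j^\lambda$. Combined with $\eta_k^\theta(1+\sum_{j<k}\eta_j^\lambda)\to0$, this yields $\eta_k^\theta\lambda_k\to0$, so the primal step $\eta_k^\theta(\nabla J+\lambda_k\nabla C)$ vanishes. I then use the descent lemma on $\tilde C$ along the primal step:
\[
\tilde C(\theta_{k+1})\le \tilde C(\theta_k)-\eta_k^\theta\lambda_k\|\nabla C(\theta_k)\|^2+\eta_k^\theta G_JG_C+\tfrac{L}{2}(\eta_k^\theta)^2(G_J+\lambda_kG_C)^2 .
\]
Suppose $\lambda_k$ exceeds a threshold $\Lambda_1\approx 2G_JG_C/\nu^2$ while $\tilde C(\theta_k)\ge-\delta$. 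The gradient-norm assumption gives $\|\nabla C\|\ge\nu$, so the middle term dominates and $\tilde C$ decreases by at least $\tfrac12\eta_k^\theta\lambda_k\nu^2$ per step. The quadratic remainder is absorbed because $\eta_k^\theta\lambda_k\to0$ for large $k$.

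\emph{Step 2 (trapping the dual variable).} Since $\eta_k^\lambda/\eta_k^\theta\le\bar\rho_k\to0$, over the number of steps needed for $\tilde C$ to drop by at most $2M+\delta$, $\lambda$ changes by only $O(\bar\rho_k)$. So once $\lambda_k$ is large, $\theta$ is driven into $\{\tilde C<-\delta\}$ before $\lambda$ can grow appreciably. On that set the dual update decreases $\lambda$, because $\tilde C<0$ there. This gives the bound $\Lambda_0=\max(\lambda_0,\Lambda_1)+O(1)$. Making this precise means comparing accumulated $\sum\eta^\theta$ and $\sum\eta^\lambda$ over excursion intervals $[k_1,k_2]$ where $\lambda>\Lambda_1$. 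On such an interval, $\lambda_{k_2}-\lambda_{k_1}\le (M+|\alpha|)\sum_{k_1}^{k_2}\eta^\lambda$, while the time spent with $\tilde C\ge-\delta$ satisfies $\sum\eta^\theta\le 4M/(\Lambda_1\nu^2)$. This is the step I expect to be the main obstacle. The difficulty is that the bound on $\lambda$ must be uniform, yet it is coupled to how long $\theta$ lingers near the boundary, and the timescale separation must be used to get a uniform bound.

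\emph{Step 3 (asymptotic feasibility).} Suppose instead that there exist $\varepsilon>0$ and $n$ with $C(\theta_k)>\alpha+\varepsilon$ for all $k\ge n$. Then the dual update never clips, and $\lambda_{k}\ge\lambda_n+\varepsilon\sum_{j=n}^{k-1}\eta_j^\lambda\to\infty$ because $\sum\eta^\lambda=\infty$. This contradicts Step 1's bound $\lambda_k\le\Lambda_0$. Hence for every $\varepsilon$ and $n$ some $k\ge n$ has $C(\theta_k)\le\alpha+\varepsilon$, which is the claimed asymptotic feasibility.
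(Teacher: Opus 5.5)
The overall structure of your proposal matches the paper's. You use the crude dual bound giving $\eta_k^\theta\lambda_k\to0$, a descent estimate for $\tilde C$ on $\{\tilde C\ge-\delta\}$ once $\lambda_k$ exceeds a threshold, an excursion argument driven by $\bar\rho_n\to0$, and feasibility from $\sum_k\eta_k^\lambda=\infty$. Your Step 3 by contradiction is equivalent to the paper's ``minimum is at most the $\eta^\lambda$-weighted average'' argument, and it is fine once $\sup_k\lambda_k<\infty$ is known.

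The gap is in Step 2, which carries the whole theorem. You assert that $\theta$ is driven into $\{\tilde C<-\delta\}$ and that ``on that set the dual update decreases $\lambda$''. But your descent estimate gives no control once $\tilde C<-\delta$, so nothing you have written stops $\tilde C$ from climbing back above $0$ while $\lambda$ is still above $\Lambda_1$. Each such return would add another $O(\bar\rho)$ to $\lambda$, with no bound on the number of returns, hence no uniform bound. Relatedly, the quantity you propose to bound, the $\eta^\theta$-time spent in $\{\tilde C\ge-\delta\}$, need not be bounded by $4M/(\Lambda_1\nu^2)$. $\tilde C$ may cross the level $-\delta$ back and forth many times, and the descent budget only covers a single monotone passage. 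It is also not the relevant quantity, since $\lambda$ grows only when $\tilde C>0$.

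The missing idea is a no-return property, and you can get it from what you already have. Since $\|\theta_{k+1}-\theta_k\|\le\eta_k^\theta(G_J+\lambda_kG_C)\to0$ and $\|\nabla C\|\le G_C$ on the convex set $\Theta$, there is $K$ (also large enough for your quadratic remainder to be absorbed) with $|\tilde C(\theta_{k+1})-\tilde C(\theta_k)|<\delta/2$ for all $k\ge K$. Take such $k$ with $\lambda_k\ge\Lambda_1$. If $-\delta\le\tilde C(\theta_k)\le0$, the descent estimate gives $\tilde C(\theta_{k+1})<\tilde C(\theta_k)\le0$. If $\tilde C(\theta_k)<-\delta$, then $\tilde C(\theta_{k+1})<-\delta/2$.

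So $\tilde C$ cannot jump over the band $[-\delta,0]$. Once $\tilde C\le0$ during an excursion of $\lambda$ above $\Lambda_1$, it stays negative and $\lambda$ is non-increasing until the excursion ends. Hence $\lambda$ can grow only on the single initial block where $\tilde C>0$. On that block the per-step decrease of order $\eta_j^\theta\Lambda_1\nu^2$ bounds $\sum_j\eta_j^\theta$ by $O\bigl((M+|\alpha|)/(\Lambda_1\nu^2)\bigr)$ plus one step size, so the growth of $\lambda$ there is $O(\bar\rho_{k_1})\to0$. Adding the entry overshoot $\eta_{k_1-1}^\lambda(M+|\alpha|)\to0$ and the finitely many iterates before $K$ gives $\sup_k\lambda_k<\infty$. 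This is exactly how the paper closes the argument.
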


\begin{proof}[Proof sketch]
Let $\tilde C=C-\alpha$. A Taylor expansion shows that, near the constraint boundary, the primal update strictly decreases $\tilde C$ whenever $\lambda_k$ is large.
Timescale separation makes dual growth during such excursions vanish; then negative dual update keeps $\lambda_k$ bounded.
Moreover, $\eta_k^\lambda\tilde C(\theta_k)\le\lambda_{k+1}-\lambda_k$.
Summing and using $\sum_k\eta_k^\lambda=\infty$ gives $\liminf_k\tilde C(\theta_k)\le0$, proving asymptotic feasibility.\footnote{The complete proof is given at the anonymous site: https://anonymous.4open.science/r/convergence-proof/appendix.pdf}
\end{proof}

\section{EXPERIMENTS}

Our experiments address three questions, progressing from comparisons with existing methods to ablations of the key components of our formulation.

\begin{enumerate}
    \item \textbf{How does our method compare with existing policy-learning approaches?}
    We compare against penalty-based, model-free, and trust-region constrained methods, as well as fixed-$\lambda$ variants of our formulation, evaluating task performance, safety, and their trade-off (Section~\ref{sec:exp_baselines}).

    \item \textbf{How does the bounded constraint cost affect optimization?}
    We compare the proposed bounded cost with unbounded and hinge-based alternatives, evaluating constraint satisfaction and primal--dual training stability. (Section. \ref{sec:ablation})

    \item \textbf{What does LME aggregation provide over cumulative constraints?}
    We assess whether LME better captures critical per-step safety while maintaining
    a consistent constraint interpretation across rollout horizons.
    (Section~\ref{sec:aggregation})
\end{enumerate}
 
\subsection{Training Details}
\label{sec:training_details}

Policies are trained in a fixed-size room with randomized initial states and obstacle configurations. Each environment contains four effectively infinite vertical cylinders of radius $0.7$~m, with positions randomized subject to a minimum surface-to-surface separation of $1.4$~m. Initial and goal positions are sampled from collision-free regions with at least $1$~m obstacle clearance, while the initial horizontal velocity has a uniformly random direction and magnitude in $[0,0.2]$~m/s. Environments are resampled every epoch. Collision and clearance are computed with respect to the vehicle's center of mass. Training parameters are summarized in Table~\ref{tab:training_params}. While the analysis in Sec. \ref{ssec:proof} considers asymptotic primal--dual updates, we select practical optimization parameters to achieve stable convergence within a finite training budget.

\subsection{Baseline Comparisons}
\label{sec:exp_baselines}

We compare against baselines spanning different policy-learning and constraint
formulations. B2N~\cite{zhang2025learning} represents differentiable
quadrotor navigation with penalty-based obstacle avoidance, while
CGPO~\cite{dai2024safe} provides a constrained differentiable-simulation
baseline using adaptive trust-region updates. We additionally evaluate
penalty-based PPO to contrast model-free and analytic-gradient learning. Finally, evaluate fixed-$\lambda$ variants of our formulation,
where $\lambda$ is held constant during training, to isolate the benefit of
adaptive dual updates.

\begin{table}[t]
\centering
\scriptsize
\setlength{\tabcolsep}{2.5pt}
\caption{Training parameters.}
\label{tab:training_params}
\begin{tabular}{lll|lll}
\hline
\multicolumn{6}{c}{\textit{Task and training}} \\
Symbol & Description & Value &
Symbol & Description & Value \\
\hline
$B$ & Parallel envs. & $256$ &
$T$ & Horizon & $300$ steps \\
$\Delta t$ & Time step & $0.03$ s &
$N_{\mathrm{obs}}$ & Obstacles & $4$ \\
$N_{\mathrm{epoch}}$ & Epochs & $20$k &
$v_{\max}$ & Horiz. vel. & $1.0$ m/s \\
$v_{\max}^z$ & Vert. vel. & $0.3$ m/s &
& & \\
\hline

\multicolumn{6}{c}{\textit{Constraint}} \\
$\alpha$ & Threshold & $0.35$ &
$\beta$ & Decay & 4.0 \\
$c_{\max}$ & Cost bound & 2.0 &
$K$ & Penetration gain & 1.0 \\
$\kappa_s$ & Softplus temp. & 60.0 &
$\kappa_T$ & Temp. (time) & 20.0 \\
$\kappa_E$ & Temp. (env.) & 5.0 &
& & \\
\hline

\multicolumn{6}{c}{\textit{Loss and optimization}} \\
$w_v$ & Vel. weight & 50.0 &
$w_r$ & Rate weight & 5.0 \\
$w_a$ & Act. weight & 20.0 &
$\eta^\theta$ & Policy LR & 5e-4 \\
$\eta^\lambda$ & Dual LR & 0.05 &
$\lambda_0$ & Initial dual & 8.0 \\
\hline
\end{tabular}
\end{table}

\begin{figure}[t]
    \centering
    \includegraphics[width=0.8\columnwidth]{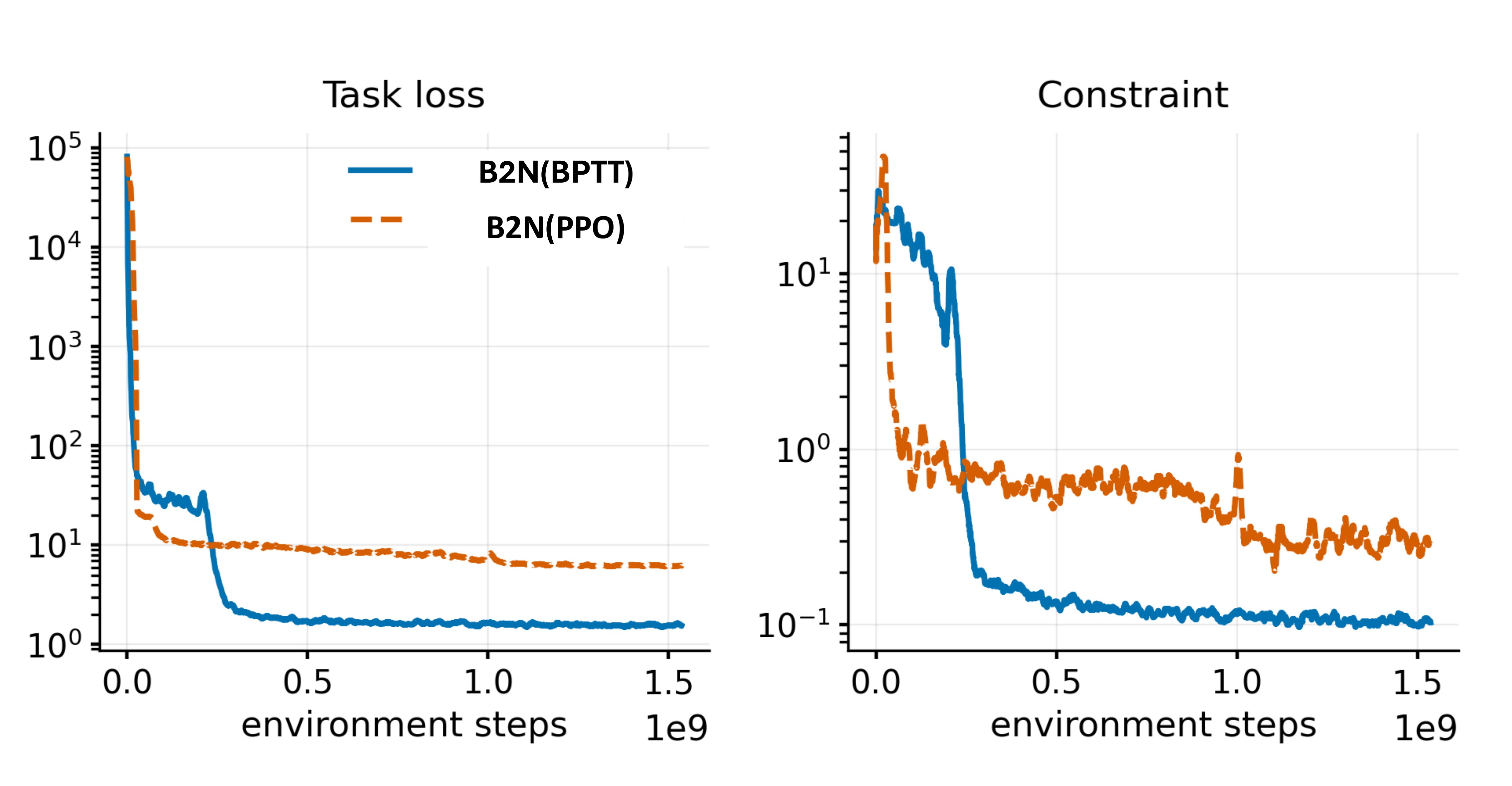}
    \caption{Training convergence of B2N (BPTT) and B2N (PPO), with B2N (BPTT)
    converging to a better-performing policy.}
    \label{fig:convergence_curve}
    \vspace{-0.5cm}
\end{figure}

All baselines are trained under the same environment settings described in Section~\ref{sec:training_details}, using the same policy inputs, task objective (Section \ref{sec:loss}), dynamics model, action space, episode horizon, and training/evaluation scene distributions. Unless required by the respective algorithm, we also keep the policy backbone. Thus, the main differences across methods arise from their treatment of collision avoidance and the corresponding optimization procedure. For B2N (BPTT), we largely follow its original cumulative collision-penalty formulation. B2N (PPO) uses the same task and collision objectives as B2N, but with the policy adapted for stochastic action outputs and an additional critic of the same backbone architecture for value estimation. Since CGPO is formulated for finite-horizon cumulative constraints, we use the same per-step constraint cost defined in Eq.~\ref{eq:cost_fn}, but aggregate it cumulatively over the trajectory rather than using the proposed LME aggregation and set its $\alpha = 5.0$ and $40$. Finally, to isolate the effect of adapting the dual variable, we compare our proposed primal--dual formulation against fixed-$\lambda$ variants spanning a range of multiplier values.

\paragraph{Evaluation Metrics}
We evaluate each policy using five metrics capturing task performance, constraint satisfaction, safety, and navigation success. \textit{Task cost} (Eqn.~\ref{eq:task_loss}) and \textit{LME} (Eqn.~\ref{eq:lse_time}) quantify task performance and aggregate constraint violation, respectively, with lower values being preferable. \textit{Clear. p5} is the 5th percentile of the minimum obstacle clearance across the method's rollouts, where positive values indicate collision-free clearance and negative values indicate obstacle penetration. \textit{Coll.} reports the percentage of trials in which the quadrotor's center of mass enters an obstacle. Finally, \textit{Success} reports the percentage of trials in which the goal is reached within a 0.2~m radius.

\paragraph{Results}
We evaluate all methods on the same 10,240 scenarios, comprising 64 randomized
obstacle layouts and 160 obstructed start--goal pairs per layout, with each
rollout lasting 600 time steps. Table~\ref{tab:baselines} shows a clear
safety--performance trade-off for fixed-$\lambda$ training: increasing
$\lambda$ from $4$ to $32$ improves the 5th-percentile clearance from
$-0.662$~m to $+0.081$~m, but reduces success from $100\%$ to $19.4\%$
($67.9\%$ at $\lambda=16$). In contrast, our method achieves $100\%$ success
with positive clearance ($+0.018$~m) and an LME of $0.571 \pm 0.133$.
Notably, the learned dual converges to $\lambda=2.29$, yet substantially
improves safety over the fixed $\lambda=4$ policy while preserving task
completion, highlighting the benefit of adapting the dual variable rather
than manually selecting a fixed penalty.


B2N (BPTT) achieves the lowest LME ($0.019 \pm 0.006$) and highest
5th-percentile clearance ($+0.665$~m), with $100\%$ success. B2N (PPO)
also performs competitively, but achieves lower success ($90.5\%$) and
clearance ($+0.504$~m). During training, BPTT also converges to lower task
and B2N collision costs than PPO (See Fig. \ref{fig:convergence_curve}). Together, these results motivate our use
of analytic BPTT gradients, rather than model-free PPO, as the basis for the proposed primal--dual formulation. Among methods achieving $100\%$ success with positive
5th-percentile clearance, our method operates closer to the safety
boundary while reducing task cost by $31.2\%$ relative to B2N
(BPTT), demonstrating a less conservative safety--performance
trade-off.



CGPO exhibits strong sensitivity to its cumulative constraint limit $\alpha$.
At $\alpha=5$, it maintains positive 5th-percentile clearance ($+0.042$~m)
but achieves only $10.3\%$ success. Relaxing the limit to $\alpha=40$
increases success to $97.7\%$, but results in a $92.75\%$ collision rate and
a 5th-percentile clearance of $-0.664$~m. These results illustrate the
difficulty of selecting a cumulative constraint limit: $\alpha$ bounds total
constraint exposure over the trajectory and therefore does not map directly
to a desired critical obstacle clearance. Our LME formulation instead
emphasizes the largest per-step costs, providing a more direct relationship
between $\alpha$ and critical clearance through the monotonic constraint
function $c(s)$. With the selected threshold, our policy achieves $100\%$
success, a $0.44\%$ collision rate, and positive 5th-percentile clearance
($+0.018$~m).

We further compare our approach with B2N (BPTT) in a more challenging out-of-distribution setting, increasing the number of obstacles from four to eight and reducing the minimum inter-obstacle separation to $0.5$~m. Across 2,112 scenarios, our method maintains a $99.0\%$ success rate, whereas B2N (BPTT) drops to $58.3\%$ and exhibits a larger 5th-percentile obstacle incursion ($0.383$~m vs.\ $0.192$~m). Representative failures are shown in Fig.~\ref{fig:four_panel}B. These results suggest that B2N's fixed penalty weighting may require retuning under distribution shift, whereas our method generalizes more effectively to denser environments. Residual safety violations can be further mitigated through obstacle inflation without retraining.

\begin{figure*}[t]
    \centering
    \includegraphics[width=0.24\textwidth]{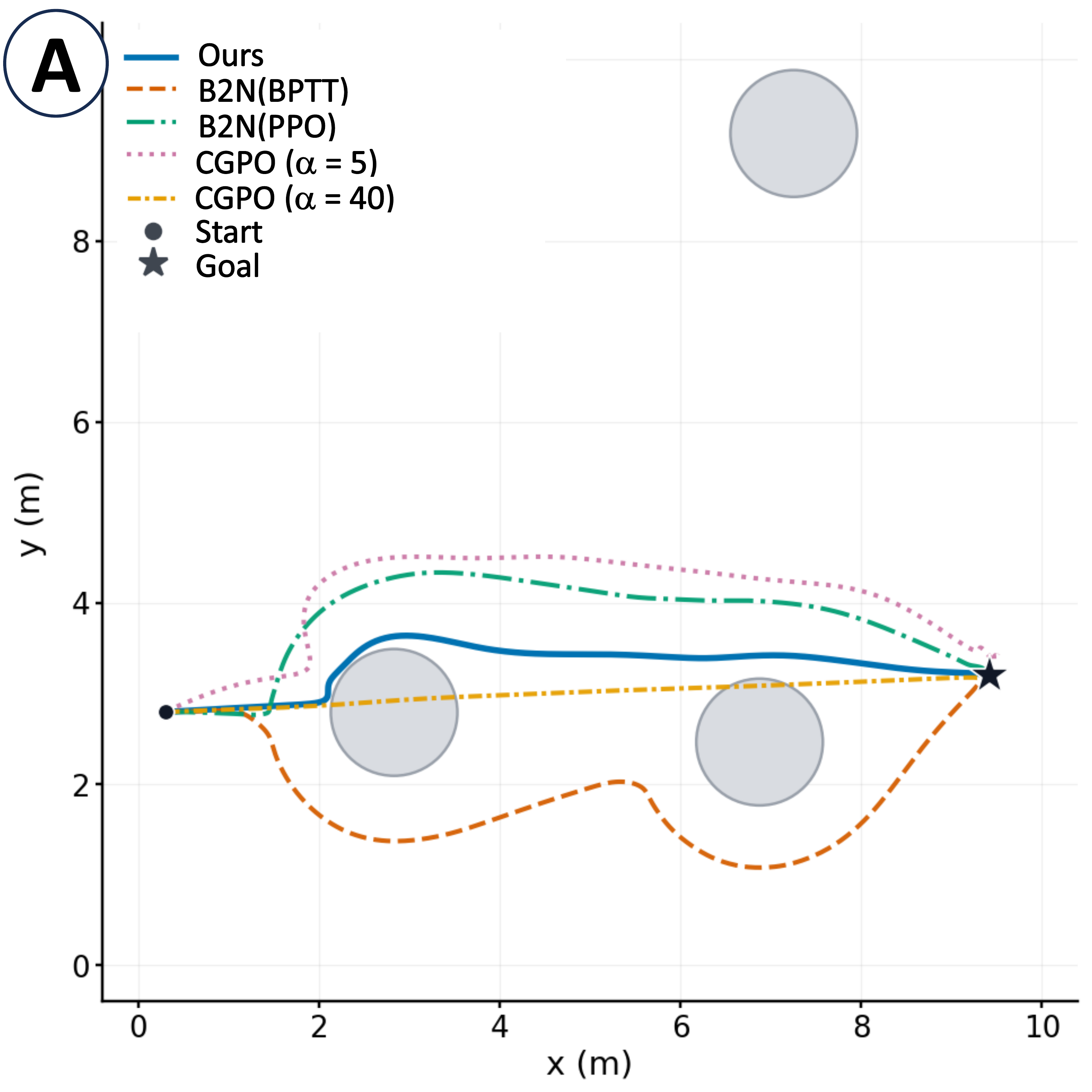}
    \hfill
    \includegraphics[width=0.24\textwidth]{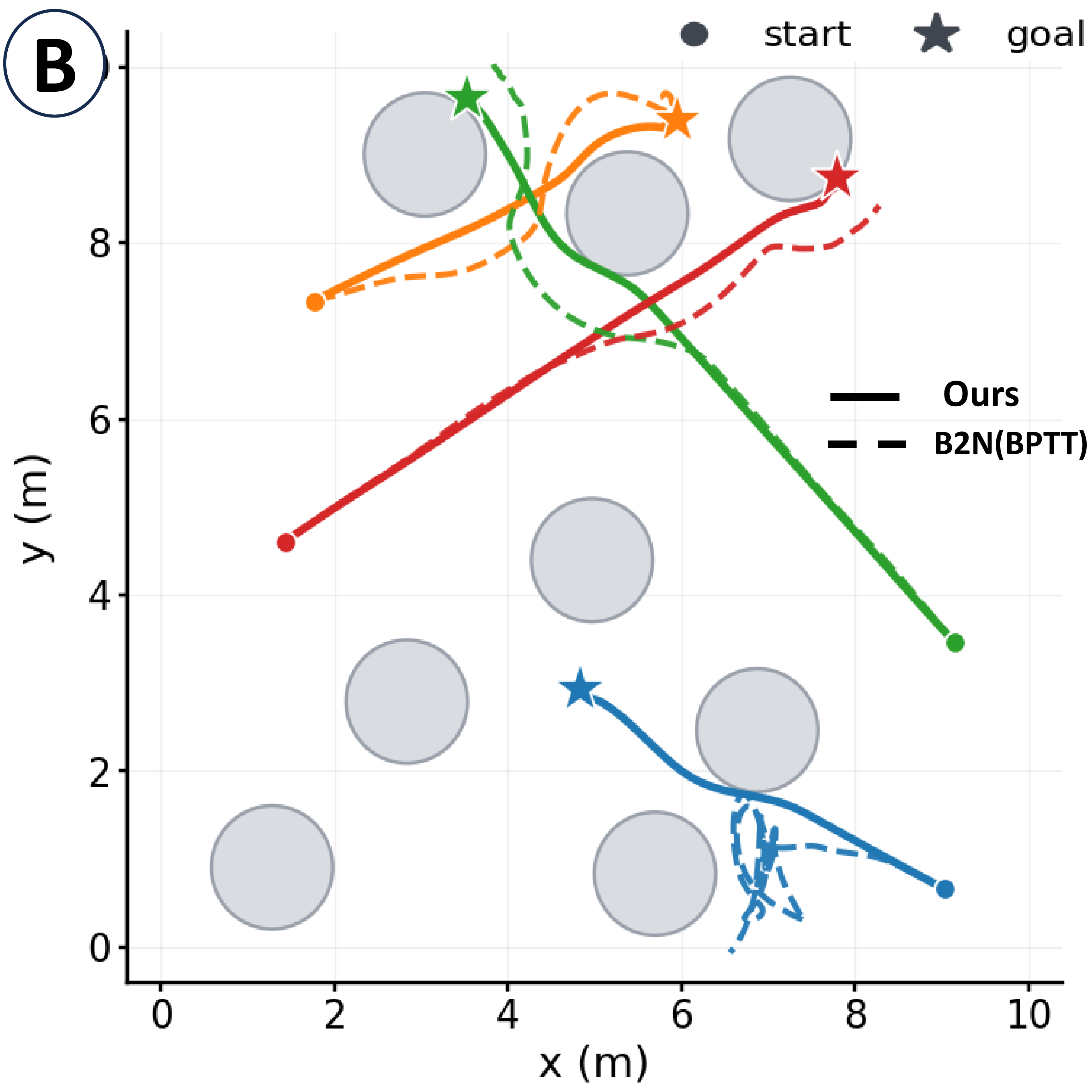}
    \hfill
    \includegraphics[width=0.24\textwidth]{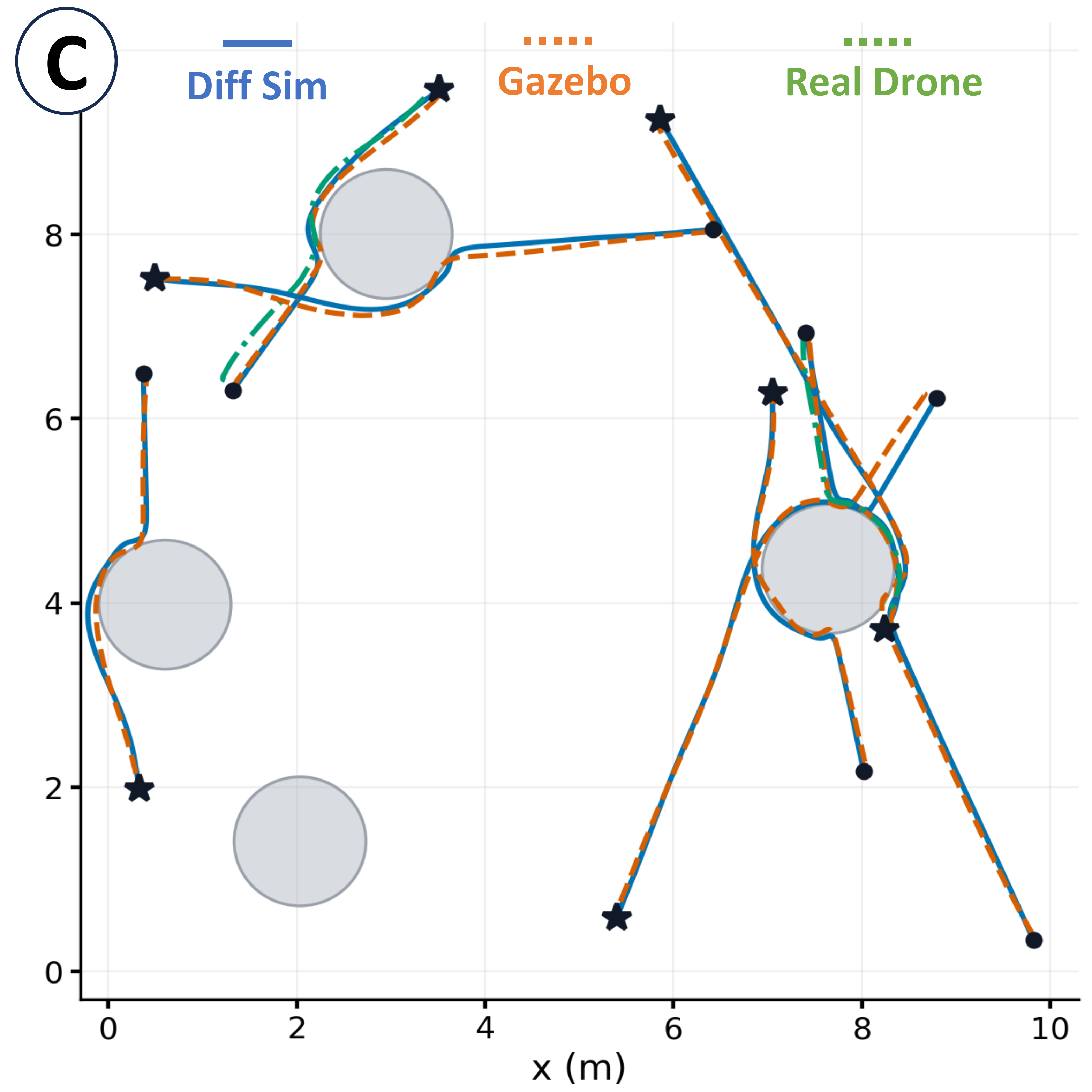}
    \hfill
    \includegraphics[width=0.24\textwidth]{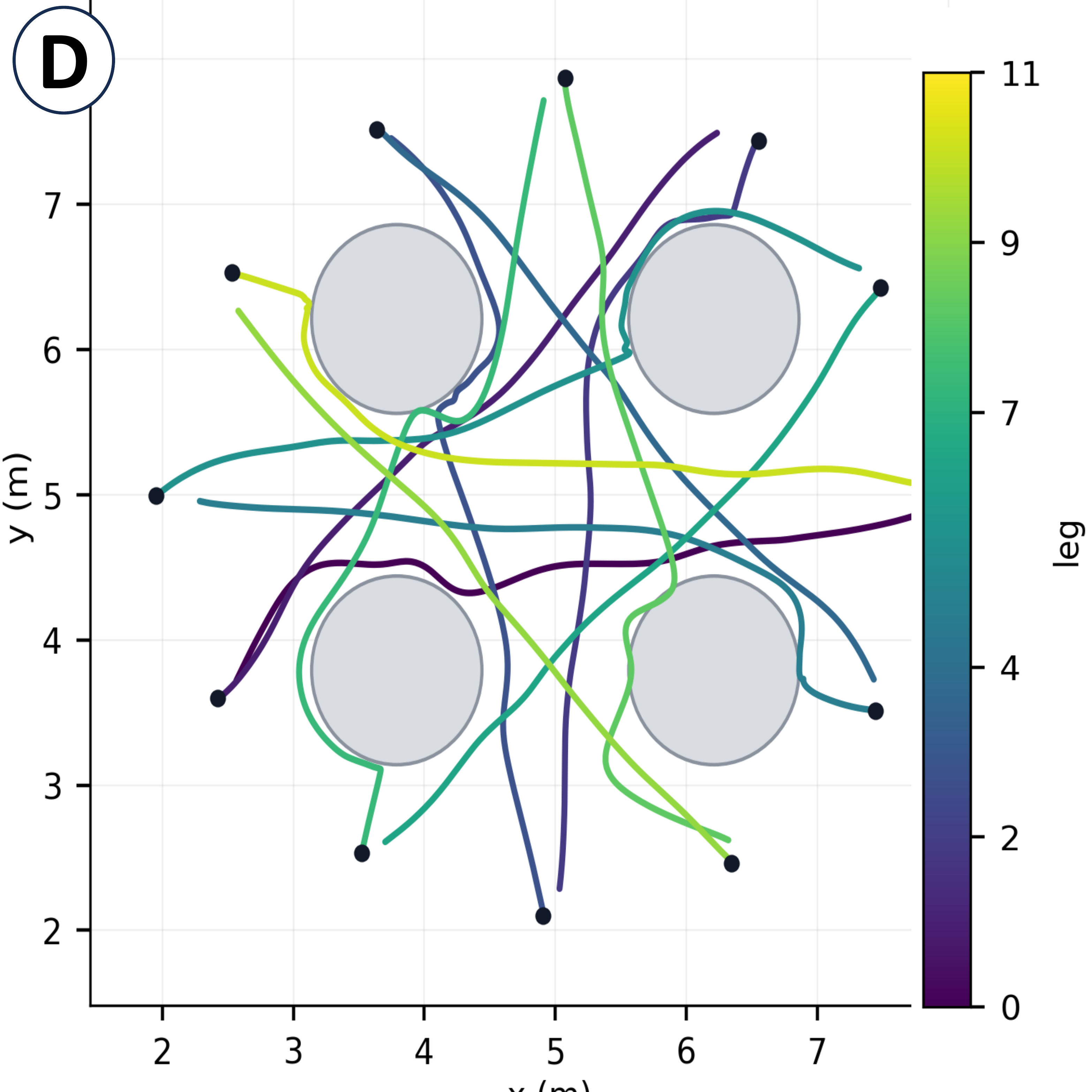}

    \caption{
    \textbf{A.} Representative trajectories of the baseline policies and our proposed method.
    \textbf{B.} Representative OOD trajectories comparing our method with B2N(BPTT). B2N fails near obstacle-adjacent goals (green/red) and in the narrow passage (blue), while ours succeeds in all cases.
    \textbf{C.} Consistent trajectories across differentiable simulation, Gazebo,
    and real-world deployment demonstrate zero-shot sim-to-real transfer.
    \textbf{D.} Real-world quadrotor trajectory over a 12-leg continuous circuit,
    with waypoints distributed on a $3$~m-radius circle centered on the obstacle
    configuration.
    }
    \label{fig:four_panel}
\end{figure*}

\begin{table}[t]
\centering
\scriptsize
\setlength{\tabcolsep}{0.9pt}
\caption{Held-out comparison over 10,240 obstructed scenarios
(mean $\pm$ std.). Shaded rows denote methods achieving 100\%
success with positive 5th-percentile clearance. Within this subset,
bold indicates the lower task cost.}
\label{tab:baselines}

\begin{tabular}{lccccc}
\toprule
Method
& Task $\downarrow$
& LME $\downarrow$
& Coll. (\%) $\downarrow$
& Clear. p5 (m) $\uparrow$
& Success $\uparrow$ \\
\midrule

B2N (PPO)
& $3.48{\pm}2.73$
& $0.027{\pm}0.054$
& $0.18$
& $+0.504$
& $90.5\%$ \\

\rowcolor{gray!12}
B2N (BPTT)
& $1.70{\pm}0.77$
& $0.019{\pm}0.006$
& $0.00$
& $+0.665$
& $100.0\%$ \\

CGPO ($\alpha / 5$)
& $6.40{\pm}5.56$
& $0.129{\pm}0.214$
& $4.12$
& $+0.042$
& $10.3\%$ \\

CGPO ($\alpha/ 40$)
& $0.52{\pm}0.11$
& $1.098{\pm}0.217$
& $92.75$
& $-0.664$
& $97.7\%$ \\

$\lambda=4$
& $0.34{\pm}0.00$
& $1.122{\pm}0.171$
& $99.23$
& $-0.662$
& $100.0\%$ \\

$\lambda=16$
& $6.76{\pm}4.99$
& $0.450{\pm}0.202$
& $8.66$
& $-0.044$
& $67.9\%$ \\

$\lambda=32$
& $12.05{\pm}4.10$
& $0.504{\pm}0.077$
& $0.29$
& $+0.081$
& $19.4\%$ \\

\rowcolor{gray!12}
\textbf{Ours}
& $\mathbf{1.17{\pm}0.77}$
& $0.571{\pm}0.133$
& $0.44$
& $+0.018$
& $100.0\%$ \\

\bottomrule
\end{tabular}
\end{table}

\begin{figure}[t]
    \centering

    \includegraphics[
        width=0.49\columnwidth,
        trim=0 15 0 15,
        clip
    ]{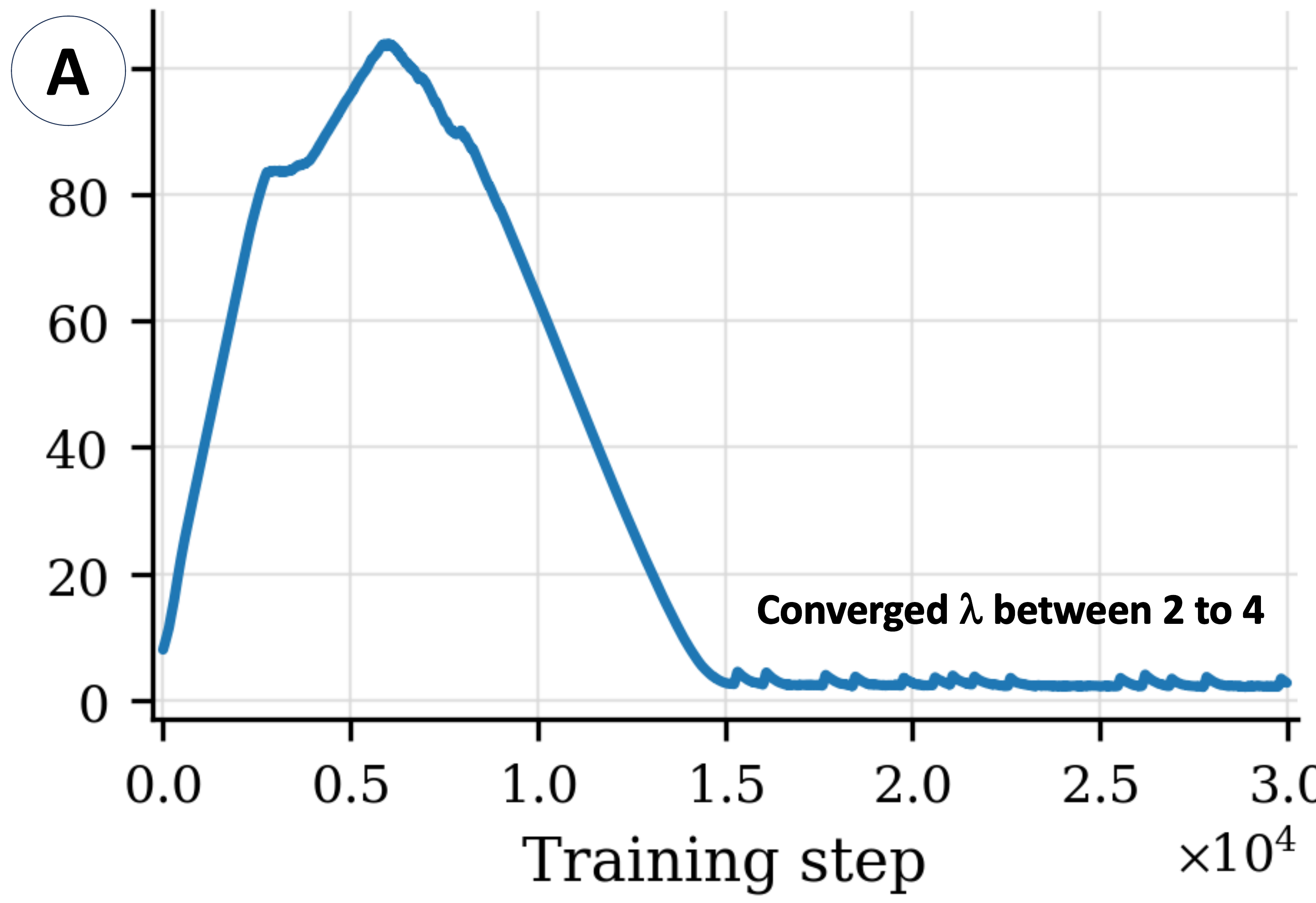}
    \hfill
    \includegraphics[
        width=0.49\columnwidth,
        trim=0 15 0 15,
        clip
    ]{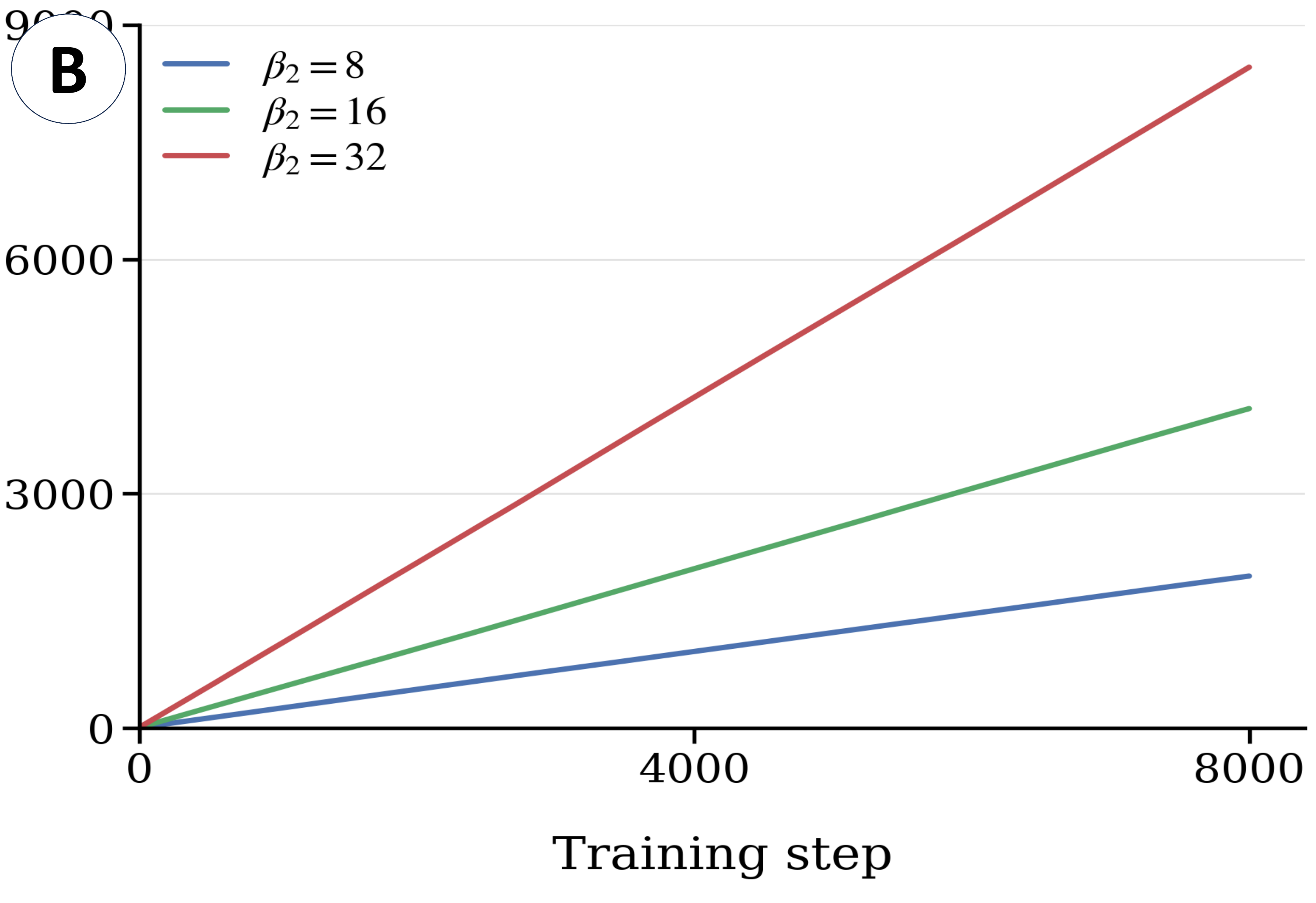}

    \vspace{2pt}

    \includegraphics[
        width=\columnwidth,
        trim=0 15 0 15,
        clip
    ]{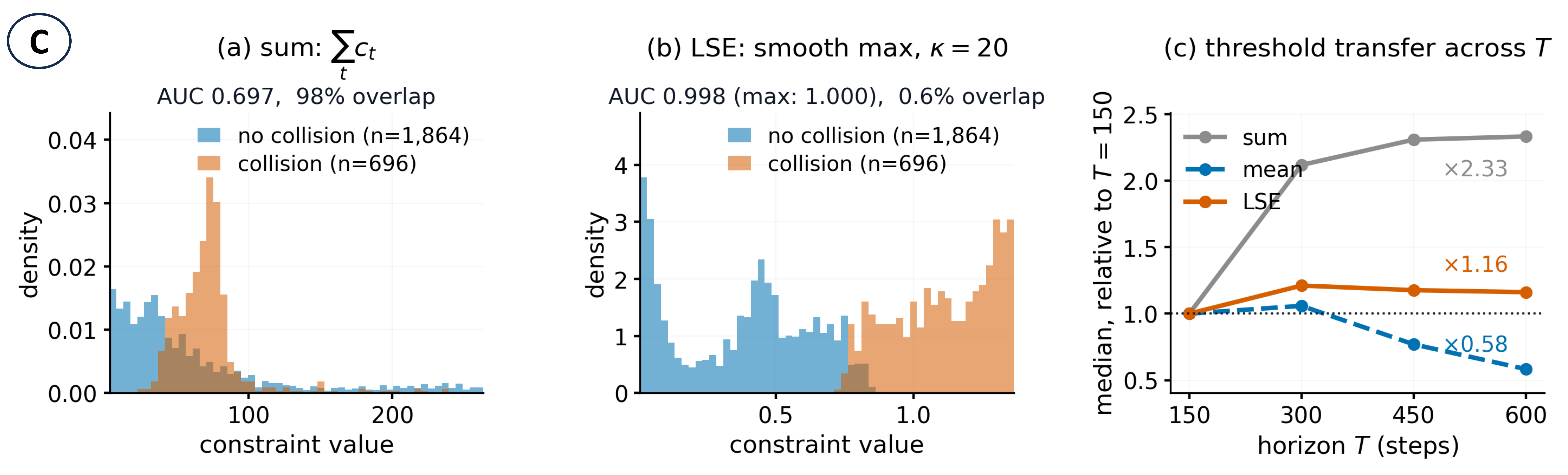}

    \caption{
    \textbf{A.} $\lambda$ evolution with our proposed bounded constraint cost.
    \textbf{B.} $\lambda$ evolution with the unbounded constraint cost used by B2N \cite{zhang2025learning} with $\beta_2$ parameter sweep.
    \textbf{C.} Analysis of constraint aggregation. LME closely matches the exact
maximum in separating colliding from collision-free trajectories and remains
consistent across rollout horizons, unlike cumulative aggregation.
    }
    \label{fig:lambda}
\end{figure}
\subsection{Ablation Study: Effect of the Bounded Constraint Cost.}
\label{sec:ablation}

We first evaluate the effect of the bounded per-step constraint cost on
primal--dual training. We replace our proposed cost
(Eq.~\ref{eq:cost_fn}) with the unbounded signed-distance-based cost used
by B2N~\cite{zhang2025learning}, while keeping the LME aggregation and
primal--dual optimization unchanged.

For a fair comparison, we align both constraint functions at the same physical
operating point. As illustrated in Fig.~\ref{fig:policy_architecture}A, our
threshold $\alpha$ corresponds to a signed clearance $s^*$; we then set the
B2N threshold $\alpha'$ to its constraint cost at the same clearance. Thus,
$\alpha$ and $\alpha'$ represent the same physical safety boundary despite
the different constraint functions. This alignment is meaningful because both
formulations aggregate per-step costs using LME, which smoothly approximates
the maximum constraint cost along the trajectory and therefore emphasizes
low-clearance states.

Fig.~\ref{fig:lambda}A and B show the evolution of the dual variable under both constraint functions. With our bounded cost, the dual variable settles to a steady-state value once the policy satisfies the constraint. Under the B2N cost it instead grows without
settling over the course of training.



This follows from the shape of the two per-step costs. The dual update increases $\lambda$ when the aggregated constraint exceeds the budget and decreases it otherwise, with the step size proportional to the violation. Under the B2N cost this signal is strongly asymmetric. The softplus grows linearly once the trajectory penetrates an obstacle, so a violation produces a large positive update, but it decays exponentially with clearance, so a satisfying trajectory produces a violation of near-zero magnitude and hence almost no downward correction. Increases therefore dominate decreases, and $\lambda$ ratchets upward across training. Because our per-step cost saturates on violation and decays gradually
with clearance, the upward and downward signals remain comparable in magnitude. The dual variable is therefore corrected in both directions at similar rates and settles to a steady-state value, yielding more stable training.

\subsection{Effect of Constraint Aggregation.}
\label{sec:aggregation}

We analyze whether LME provides a reliable surrogate for critical per-step
safety using trajectories generated by our policy and several
baselines. Using geometric collision as the ground-truth label, we compare LME, the exact maximum, and cumulative aggregation using the area under the receiver operating characteristic curve (AUROC), which measures how well each metric separates colliding from collision-free trajectories across all possible thresholds. LME achieves an AUROC of $0.998$, closely matching the exact maximum ($1.000$), while cumulative aggregation achieves only $0.697$,
indicating substantially greater overlap between the scores assigned to
colliding and collision-free trajectories. Moreover, cumulative aggregation vary substantially with trajectory length, whereas LME remains close to the maximum when increasing the horizon from $T=300$ used during training to $T=600$ during evaluation. These results indicate that LME both preserves sensitivity to localized violations and provides a more consistent constraint scale across rollout horizons.

\subsection{Zero-Shot Sim-to-Sim and Sim-to-Real Deployment}
\label{sec:sim2real}

To evaluate transfer beyond the differentiable simulator, we deploy the learned policy zero-shot in Gazebo/PX4 and on a physical quadrotor, without additional training or adaptation.

\paragraph{Sim-to-Sim Transfer.}
We first deploy the policy in Gazebo with PX4 in the loop, introducing
higher-fidelity dynamics and an independent low-level controller. Across 100
trajectories, we compare velocity-tracking cost, the LME-aggregated proposed
constraint cost (Eqs.~\ref{eq:cost_fn} and~\ref{eq:lse_time}), and obstacle
incursion (Table~\ref{tab:sim2sim}). The simulators show close agreement in
tracking, with an $13.2\%$ difference in velocity cost and a positional RMSE of
$0.369 \pm 0.234$\,m. The LME increases from $0.669$ to $0.863$, while the
mean deepest incursion increases from $0.1$ to $18.8$\,mm, likely due to
unmodeled noise and differences in low-level controller response. The
worst-case incursion remains $97.1$\,mm, suggesting that the observed
sim-to-sim safety gap can be accommodated in practice by introducing a
corresponding obstacle-inflation margin during training. Figure \ref{fig:four_panel}C also show the closely-matched trajectories.

\begin{table}[t]
  \centering
  \caption{
  Sim-to-sim comparison (mean $\pm$ std.).}
  \label{tab:sim2sim}
  \setlength{\tabcolsep}{6pt}
  \begin{tabular}{lcc}
    \toprule
    Metric & Diff Sim & Gazebo \\
    \midrule

    Velocity tracking cost, $\ell_t^{\mathrm{v}}$         & $0.068 \pm 0.031$   & $0.077 \pm 0.034$ \\
    LME constraint               & $0.669 \pm 0.114$   & $0.863 \pm 0.134$ \\
    Deepest incursion (\si{\meter})
                                 & $0.0001 \pm 0.0012$ & $0.0188 \pm 0.0217$ \\
    \bottomrule
  \end{tabular}
  \vspace{-0.8cm}
\end{table}

\paragraph{Sim-to-Real Transfer.}
We deploy the same policy zero-shot on a physical quadrotor with PX4, using
Vicon and onboard IMU measurements for state estimation. Across $10\%$ of the
sim-to-sim test cases, the policy achieves a velocity-tracking cost of $0.0958$
and LME of $0.836$, with no obstacle incursions, demonstrating collision-free
zero-shot transfer.

We further evaluate a cluttered-center course comprising four $0.71$~m-radius
cylindrical obstacles with a minimum corridor width of $1.01$~m. Twelve
waypoints on a $3$~m-radius ring are executed consecutively as a continuous
circuit (Fig.~\ref{fig:four_panel}D). The quadrotor reaches all 12 goals, with
11 legs collision-free and a maximum intrusion of $8$~cm on the remaining leg.

\section{CONCLUSIONS}

We presented a constrained differentiable policy learning formulation for
quadrotor obstacle avoidance that replaces manually weighted safety penalties
with an explicit safety constraint. Two components enable effective
primal--dual training: a bounded clearance-based constraint cost that provides
informative gradients while limiting the influence of severe violations, and
LME aggregation that emphasizes critical constraint costs within a smooth
trajectory-level constraint. Experiments demonstrate favorable
safety--performance trade-offs, generalization to denser out-of-distribution
obstacle configurations, and zero-shot transfer to Gazebo/PX4 and a physical
quadrotor. Future work will extend the framework to partially observable
settings and vision-based navigation, removing the reliance on directly
available state and obstacle information.

\addtolength{\textheight}{-12cm}   





\bibliographystyle{IEEEtran}
\bibliography{IEEECmdp}




\clearpage
\addtolength{\textheight}{12cm}
\global\vsize=\textheight
\global\pagegoal=\textheight
\makeatletter
\global\@colroom=\textheight
\global\@colht=\textheight
\makeatother
\appendix
\section{Convergence Analysis}
\label{appsec:theory_proof}

In this file, we provide a detailed proof of the convergence of our proposed method. We first introduce the necessary notation and formally define the problem.
Then, we restate the main theorem and provide a rigorous proof.

\subsection{Notation and Problem Definition}

Consider the constrained optimization problem described in the main text, for ease of presentation we consider the case in a single environment \(B=1\) with a single obstacle \(M=1\) and the start and goal states fixed\footnote{This formulation can be straightforwardly generalized to multiple environments and obstacles. The extension to sampled start and goal states follows by introducing standard stochastic-approximation arguments, under the usual regularity conditions of bounded martingale-difference noise.}.
The objective function in our problem is
\begin{equation}
    J(\theta) = \frac1T \sum_{t=0}^{T-1} \ell_t(x_t, u_t),
\end{equation}
and the constraint function is then
\begin{equation}
    C(\theta) = \frac{1}{\kappa_T} \log \left( \frac1T \sum_{t=0}^{T-1} e^{\kappa_T c_t(\theta)} \right),
\end{equation}
where the cost function $c(s) = [1 + (c_{\max}-1) \tanh(K S(-s))] e^{-\beta S(s)}$ and $S(s) = \frac1{\kappa_s} \log(1 + e^{\kappa_s s})$. With parameters $c_{\max}>1$, $K>0$, $\beta>0$ and $\kappa_s>0$ chosen in the paper, we have $0<c(s)<c_{\max}$ and $c'(s)<0$. The cost $c_t(\theta):=c(s_t(\theta))$ at time step \(t\in\{0,\ldots,T-1\}\) with \(T\in\mathbb N\) is derived from the signed clearance \(s_t(\theta)=s(x_t, \mathcal{O})\) induced by the effect of policy parameter \(\theta\in\mathbb R^p\) on the vehicle state $x_t$.

The original constrained optimization problem can be equivalently expressed as
\begin{equation}
    \min_\theta J(\theta) \;\;\; \text{s.t.}\;\;\; \tilde{C}(\theta)\le0,
\end{equation}
with $\tilde{C}(\theta) := C(\theta)-\alpha$. The Lagrangian is $L(\theta,\lambda) = J(\theta)+\lambda \tilde{C}(\theta)$ with $\lambda\ge0$. The primal-dual recursion is
\begin{align}
    & \theta_{k+1} = \theta_k - \eta_k^{\theta} \left[ \nabla_\theta J(\theta_k) + \lambda_k\nabla_\theta C(\theta_k) \right], \label{eq_app:update_primal} \\
    & \lambda_{k+1} = [ \lambda_k + \eta_k^{\lambda}\tilde{C}(\theta_k) ]_+. \label{eq_app:update_dual}
\end{align}
We drop the subscript $\theta$ and $\theta_k$ for $\nabla$ when it is clear from context that the gradient is taken with respect to $\theta$ and evaluated at $\theta_k$.
For a compact convex set $\Theta \subset\mathbb R^p$, \(J\) is continuously differentiable on a neighborhood of \(\Theta\) according to our task definition. 
The constraint \(C\) is twice continuously differentiable. 
We also denote $B_C \coloneq \sup_{\theta\in\Theta} \|\nabla C(\theta)\|$, $H_C \coloneq \sup_{\theta\in\Theta} \|\nabla^2C(\theta)\|$ and $B_J \coloneq \sup_{\theta\in\Theta} \|\nabla J(\theta)\|$.

\subsection{Proof of the Main Theorem}

We first restate Theorem 1 below which now includes the detailed formulation.

\begin{theorem}
    Given the update rule from Eq.~(\ref{eq_app:update_primal}) and (\ref{eq_app:update_dual}) at iteration step $k\in\mathbb N_0$, we use a sequence \(\{\eta_k^{\theta}\}\) and \(\{\eta_k^{\lambda}\}\) that satisfy
    \begin{equation}
        \label{eq:step_cond_1}
        \eta_k^{\theta} > 0, \;\;\; \eta_k^{\lambda} > 0,
    \end{equation}
    \begin{equation}
        \label{eq:step_cond_2}
        \eta_k^{\theta}\to0, \;\;\; \eta_k^{\lambda}\to0,
    \end{equation}
    \begin{equation}
        \label{eq:step_cond_3}
        \sum_{k=0}^{\infty}\eta_k^{\theta}=\infty, \;\;\;\sum_{k=0}^{\infty}\eta_k^{\lambda}=\infty,
    \end{equation}
    and
    \begin{equation}
        \label{eq:step_cond_4}
        \bar\rho_n \coloneq \sup_{k\ge n} \frac{\eta_k^{\lambda}}{\eta_k^{\theta}} \rightarrow0.
    \end{equation}
    In addition,
    \begin{equation}
        \label{eq:step_cond_5}
        \eta_k^{\theta} \left( 1+\sum_{j=0}^{k-1}\eta_j^{\lambda} \right) \rightarrow0.
    \end{equation}
    Assume that there exists a compact convex set $\Theta \subset\mathbb R^p$ such that $\forall k$, $\theta_k\in\Theta$, and there exist constants $\delta>0$ and $\nu>0$ such that $\tilde{C}(\theta)\ge-\delta \;\; \Rightarrow \;\; \|\nabla C(\theta)\|\ge\nu$, then during optimization under Eq.~(\ref{eq_app:update_primal}) and (\ref{eq_app:update_dual}), $\lambda_k$ remains bounded $\lambda_k\in[0,\Lambda_0]$, and we can obtain an asymptotically feasible policy such that $\forall\varepsilon>0$, $\forall n,$ $\exists k\ge n: C(\theta_k)\le\alpha+\varepsilon$.
\end{theorem}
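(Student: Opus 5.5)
The proof has two parts: a uniform bound on the dual iterates, and then asymptotic feasibility obtained by telescoping the dual recursion. Throughout I use the constants $B_C,H_C,B_J$ defined on the compact set $\Theta$. I also use that $C$ is bounded on $\Theta$: since $0<c<c_{\max}$, we have $|\tilde C|\le c_{\max}+\alpha=:M_C$.

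\textbf{Step 1 (local descent of $\tilde C$ when $\lambda$ is large).} A second-order Taylor expansion of $C$ along the primal step gives
\begin{equation*}
\tilde C(\theta_{k+1})\le \tilde C(\theta_k)-\eta_k^\theta\lambda_k\|\nabla C\|^2+\eta_k^\theta B_JB_C+\tfrac{H_C}{2}(\eta_k^\theta)^2(B_J+\lambda_kB_C)^2 .
\end{equation*}
On the region $\tilde C(\theta_k)\ge-\delta$ we have $\|\nabla C\|\ge\nu$. Hence, whenever $\lambda_k\ge\Lambda^\star:=2B_JB_C/\nu^2$ and
\begin{equation*}
\eta_k^\theta\lambda_k^2\ \text{is small relative to}\ \lambda_k\nu^2,
\end{equation*}
the right-hand side satisfies $\tilde C(\theta_{k+1})\le\tilde C(\theta_k)-\tfrac12\eta_k^\theta\lambda_k\nu^2$.

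The second-order term is controlled as follows. Iterating the dual recursion gives $\lambda_k\le\lambda_0+M_C\sum_{j<k}\eta_j^\lambda$. Condition (\ref{eq:step_cond_5}) then implies $\eta_k^\theta\lambda_k\to0$, so the quadratic term, of order $\eta_k^\theta\lambda_k\cdot\eta_k^\theta\lambda_k$, is eventually dominated by the linear descent term $\eta_k^\theta\lambda_k\nu^2$.

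\textbf{Step 2 (boundedness of $\lambda_k$).} Suppose $\lambda_k$ exceeds a level $\Lambda_1>\Lambda^\star$ for some $k\ge n_0$. Consider the excursion during which $\lambda$ stays above $\Lambda^\star$.

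Within the excursion, Step 1 shows $\tilde C$ decreases by at least $\tfrac12\eta_k^\theta\Lambda^\star\nu^2$ per step whenever $\tilde C\ge-\delta$. Meanwhile $\lambda$ rises by at most $\eta_k^\lambda\tilde C(\theta_k)^+$ per step.

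Timescale separation (\ref{eq:step_cond_4}) gives the key comparison. The total dual increase over any stretch where $\tilde C>0$ is at most
\begin{equation*}
\sum \eta_k^\lambda M_C\le \bar\rho_n M_C\sum\eta_k^\theta\le \bar\rho_n M_C\cdot\frac{2M_C}{\Lambda^\star\nu^2}.
\end{equation*}
The last inequality holds because $\tilde C$ can only fall by a total of about $M_C$ before becoming negative. This increase tends to $0$ as $n\to\infty$.

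Once $\tilde C<0$, the dual update is non-positive, so $\lambda$ decreases. Re-entry into $\{\tilde C>0\}$ from $\{\tilde C<0\}$ cannot happen while $\lambda\ge\Lambda^\star$. The reason is that in the band $[-\delta,0]$ the function $\tilde C$ is still strictly decreasing, and the jumps of $\tilde C$ are $O(\eta_k^\theta(1+\lambda_k))\to0$, so eventually they are smaller than $\delta$ and cannot skip over the band.

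Therefore, after a finite index $n_0$, each upward excursion adds only a vanishing amount to $\lambda$. This yields a bound $\lambda_k\le\Lambda_0:=\max\{\max_{k\le n_0}\lambda_k,\Lambda^\star\}+1$.

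I expect this step to be the main obstacle. The difficulty is bookkeeping the interleaving of excursions and ensuring that the smallness of the step sizes (conditions (\ref{eq:step_cond_4})--(\ref{eq:step_cond_5})) is applied at a uniform index $n_0$. My first approach would be to fix $n_0$ so that $\bar\rho_{n_0}$ and $\sup_{k\ge n_0}\eta_k^\theta(1+\lambda_0+M_C\sum_{j<k}\eta_j^\lambda)$ are below explicit thresholds, and then argue by contradiction on the first time $\lambda$ exceeds $\Lambda_0$.

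\textbf{Step 3 (asymptotic feasibility).} Since $[x]_+\ge x$, we have $\eta_k^\lambda\tilde C(\theta_k)\le\lambda_{k+1}-\lambda_k$. Suppose for contradiction that $\tilde C(\theta_k)>\varepsilon$ for all $k\ge n$. Summing from $n$ to $N$ gives
\begin{equation*}
\varepsilon\sum_{k=n}^N\eta_k^\lambda\le\lambda_{N+1}-\lambda_n\le\Lambda_0 .
\end{equation*}
This contradicts $\sum_k\eta_k^\lambda=\infty$. Hence $\liminf_k\tilde C(\theta_k)\le0$, which is exactly the statement that $\forall\varepsilon>0,\ \forall n,\ \exists k\ge n$ with $C(\theta_k)\le\alpha+\varepsilon$.
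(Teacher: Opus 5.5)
Your proposal is correct and follows essentially the same route as the paper. Both arguments use the a priori bound $\lambda_k\le\lambda_0+M_C\sum_{j<k}\eta_j^\lambda$ to get $\eta_k^\theta\lambda_k\to0$, Taylor-based descent of $\tilde C$ whenever $\lambda_k$ is large and $\tilde C(\theta_k)\ge-\delta$, timescale separation to bound dual growth over positive-residual stretches, the vanishing-jump argument to rule out re-entry into $\{\tilde C>0\}$, and finally telescoping $\eta_k^\lambda\tilde C(\theta_k)\le\lambda_{k+1}-\lambda_k$ to obtain $\liminf_k\tilde C(\theta_k)\le0$. The differences are cosmetic. Your threshold $2B_JB_C/\nu^2$ comes from a cruder Cauchy--Schwarz bound. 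With that threshold the descent constant is really $\tfrac14$ rather than $\tfrac12$ once the quadratic term is absorbed. You also define $\Lambda_0$ as an actual bound on $\sup_k\lambda_k$ rather than as the descent threshold, which is what the final telescoping step needs; the paper uses $\Lambda_0$ in that step without this justification.
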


\begin{proof}

We begin by proving that during optimization the sequence $\{\lambda_k\}$ is bounded. Due to the property of LME, $0<C(\theta)<c_{\max}$. Hence, $\forall\theta\in\Theta$,
\begin{equation}
    |\tilde{C}(\theta)|\le \max\{\alpha,c_{\max}-\alpha\} \coloneqq c_{\max}^\alpha.
\end{equation}
By Eq.~(\ref{eq_app:update_dual})
\begin{equation}
    \lambda_{k+1} = [\lambda_k + \eta_k^{\lambda} \tilde{C}(\theta_k)]_+ \le \lambda_k+\eta_k^{\lambda} c_{\max}^\alpha.
\end{equation}
Thus,
\begin{equation}
    \lambda_k \le \lambda_0+ c_{\max}^\alpha \sum_{j=0}^{k-1} \eta_j^{\lambda}
\end{equation}
Multiplying by \(\eta_k^{\theta}\) we have
\begin{equation}
    \eta_k^{\theta}\lambda_k \le \eta_k^{\theta}\lambda_0 + c_{\max}^\alpha \eta_k^{\theta}\sum_{j=0}^{k-1}\eta_j^{\lambda}
\end{equation}
By Eq.~(\ref{eq:step_cond_5}) we have
\begin{equation}
    \label{eq_app:converg_step_dual}
    \eta_k^{\theta}\lambda_k \to 0.
\end{equation}
Let $\Lambda_0> \nicefrac{2B_J}{\nu}$. Consider an iteration for which $\lambda_k\ge\Lambda_0$ and $\tilde{C}(\theta_k)\ge-\delta$, by Taylor's theorem,
\begin{equation}
    \label{eq_app:taylor}
    C(\theta_{k+1}) \le C(\theta_k) + \nabla C(\theta_k)^\top\Delta\theta_k + \frac{H_C}{2} \|\Delta\theta_k\|^2,
\end{equation}
where
\begin{align}
    \Delta\theta_k \coloneq{}& \theta_{k+1}-\theta_k \\
                   ={} & -\eta_k^{\theta} \left( \nabla J(\theta_k) + \lambda_k\nabla C(\theta_k) \right). \label{eq_app:theta_diff}
\end{align}
Substituting Eq.~(\ref{eq_app:theta_diff}) into Eq.~(\ref{eq_app:taylor}) gives:
\begin{equation}
\begin{split}
    \label{eq_app:c_diff}
    \tilde{C}(\theta_{k+1})-\tilde{C}(\theta_k) \le{}& -\eta_k^{\theta} \nabla C^\top\nabla J \\
    &- \eta_k^{\theta}\lambda_k \|\nabla C\|^2 \\
    &+ \frac{H_C}{2} (\eta_k^{\theta})^2 \| \nabla J+\lambda_k\nabla C \|^2,
\end{split}
\end{equation}
The first two terms in Eq.~(\ref{eq_app:c_diff}) satisfy
\begin{align}
    -\nabla C^\top\nabla J - \lambda_k \|\nabla C\|^2 &\le B_J\|\nabla C\| - \lambda_k\|\nabla C\|^2 \\
    & = -\|\nabla C\| \left( \lambda_k\|\nabla C\|-B_J \right). \label{eq_app:c_diff_first2}
\end{align}
by Cauchy-Schwarz $-\nabla C^\top\nabla J \le B_J\|\nabla C\|$.
Since we are at an iteration $k$ where \(\tilde{C}(\theta_k)\ge-\delta\), \(\lambda_k\ge\Lambda_0\) and $\lambda_k\nu>2B_J$, by assumption $\|\nabla C(\theta_k)\|\ge\nu$.
Hence,
\begin{align}
    \lambda_k\|\nabla C\| - B_J & \ge \lambda_k\|\nabla C\| - \frac12\lambda_k\nu \\
    & \ge \lambda_k\|\nabla C\| - \frac12\lambda_k\|\nabla C\| \\
    & = \frac12\lambda_k\|\nabla C\|.
\end{align}
Substituting this into Eq.~(\ref{eq_app:c_diff_first2}) gives
\begin{align}
    B_J\|\nabla C\| - \lambda_k\|\nabla C\|^2 & \le -\frac12 \lambda_k \|\nabla C\|^2 \\
    & \le -\frac12 \lambda_k\nu^2.
\end{align}
The third term in Eq.~(\ref{eq_app:c_diff}) satisfy $\| \nabla J+\lambda_k\nabla C \| \le B_J+\lambda_kB_C$.
Thus,
\begin{equation}
\begin{split}
    \label{eq_app:c_diff_bound}
    \tilde{C}(\theta_{k+1})-\tilde{C}(\theta_k) \le{}& -\frac12 \eta_k^{\theta}\lambda_k\nu^2 \\
    &+ \frac{H_C}{2} (\eta_k^{\theta})^2 (B_J+\lambda_kB_C)^2.
\end{split}
\end{equation}%
Divide the second term in Eq.~(\ref{eq_app:c_diff_bound}) by \(\eta_k^{\theta}\lambda_k\) we can get
\begin{align}
    \frac{ \frac{H_C}{2}(\eta_k^{\theta})^2(B_J+\lambda_kB_C)^2 }{ \eta_k^{\theta}\lambda_k } &= \frac{H_C}{2} \eta_k^{\theta} \frac{ (B_J+\lambda_kB_C)^2 }{ \lambda_k } \\
    &= \frac{H_C}{2} \eta_k^{\theta} \left( \frac{B_J^2}{\lambda_k} + 2B_JB_C + \lambda_kB_C^2 \right) \\
    &\le M\eta_k^{\theta} \left( \frac1{\lambda_k} + 1+ \lambda_k \right) \rightarrow 0,
\end{align}
since $M$ is a constant and$\nicefrac{\eta_k^{\theta}}{\lambda_k}\to0$, $\eta_k^{\theta}\to0$ and $\eta_k^{\theta}\lambda_k\to0$ according to Eq.~(\ref{eq_app:converg_step_dual}).
Therefore, there exists an index \(K_1\) such that for all
\(k\ge K_1\), \(\lambda_k\ge\Lambda_0\) and \(\tilde{C}(\theta_k)\ge-\delta\),
\begin{align}
    \frac{H_C}{2} (\eta_k^{\theta})^2 (B_J+\lambda_kB_C)^2 &\le \epsilon\eta_k^{\theta}\lambda_k \\
    &\le \frac14 \eta_k^{\theta}\lambda_k\nu^2,
\end{align}
by choosing \(\epsilon=\nicefrac{\nu^2}{4}\). As a result, Eq.~(\ref{eq_app:c_diff_bound}) gives
\begin{align}
    \tilde{C}(\theta_{k+1})-\tilde{C}(\theta_k) &\le -\frac14\eta_k^{\theta}\lambda_k\nu^2 \\
    &\le -\frac14\Lambda_0\eta_k^{\theta}\nu^2. \label{eq_app:constraint_diff_bound}
\end{align}
In word, whenever the multiplier is sufficiently large and the policy is unsafe or near the boundary, the primal update strictly decreases the constraint residual.

We next show that individual changes in \(\tilde{C}\) become arbitrarily small. Eq.~(\ref{eq_app:theta_diff}) can be bounded by
\begin{equation}
    \|\Delta\theta_k\| \le \eta_k^{\theta}(B_J+\lambda_kB_C) \rightarrow 0,
\end{equation}
by Eq.~(\ref{eq:step_cond_2}) and~(\ref{eq_app:converg_step_dual}).
From Eq.~(\ref{eq_app:taylor}) we then have
\begin{align}
    |C(\theta_{k+1})-C(\theta_k)| \le B_C\|\Delta\theta_k\| + \frac{H_C}{2} \|\Delta\theta_k\|^2 \rightarrow 0.
\end{align}
Choose \(K_2\ge K_1\) so large that $\forall k\ge K_2$,
\begin{equation}
    \label{eq_app:converg_constraint}
    |\tilde{C}(\theta_{k+1})-\tilde{C}(\theta_k)| < \frac{\delta}{2}.
\end{equation}

Now consider an excursion of the multiplier above \(\Lambda_0\). Suppose at some \(n\ge K_2\), $\lambda_n\ge\Lambda_0$. First suppose $\tilde{C}(\theta_n)\le0$. If $-\delta\le \tilde{C}(\theta_n)\le0$, Eq.~(\ref{eq_app:constraint_diff_bound}) gives
\begin{equation}
    \tilde{C}(\theta_{n+1})<\tilde{C}(\theta_n)\le0.
\end{equation}
If $\tilde{C}(\theta_n)<-\delta$ happens, then Eq.~(\ref{eq_app:converg_constraint}), which does not depend on $\tilde{C}(\theta_n)\ge-\delta$, gives
\begin{equation}
    \tilde{C}(\theta_{n+1})<\tilde{C}(\theta_n)+\frac{\delta}{2}<-\frac{\delta}{2}<0.
\end{equation}
In either case, as long as $\lambda_k\ge\Lambda_0$, we have $\tilde{C}(\theta_{n})<0$ and consequently $\lambda_{k+1}\le\lambda_k$. So once \(\tilde{C}(\theta_k)\le0\), the multiplier cannot increase while it remains above \(\Lambda_0\). Now suppose instead that $\tilde{C}(\theta_n)>0$. Define $m \coloneq \min \{ j\ge n: \tilde{C}(\theta_j)\le0 \}$. When \(n\le j<m\), $\tilde{C}(\theta_j)>0$. So the multiplier cannot decrease and therefore $\lambda_j\ge\Lambda_0.$ From Eq.~(\ref{eq_app:constraint_diff_bound}) we have
\begin{align}
    \tilde{C}(\theta_{m}) &= \tilde{C}(\theta_n) + \sum_{j=n}^{m-1} \left( \tilde{C}(\theta_{j+1})-\tilde{C}(\theta_j) \right) \\
    &\le \tilde{C}(\theta_n) - \sum_{j=n}^{m-1} \frac14\Lambda_0\eta_j^{\theta}\nu^2
\end{align}
By Eq.~(\ref{eq:step_cond_3}), $\sum_{j=n}^{\infty}\eta_j^{\theta}=\infty$. Hence, there must exist a finite \(m\) such that $\tilde{C}(\theta_m)\le0$ and $\tilde{C}(\theta_{m-1})>0$. Similarly, we have
\begin{equation}
    \label{eq_app:c_diff_decreasing}
    0< \tilde{C}(\theta_{m-1}) \le \tilde{C}(\theta_n) - \frac14\Lambda_0\nu^2 \sum_{j=n}^{m-2} \eta_j^{\theta}.
\end{equation}
Recall that in this subcase we have supposed $\tilde{C}(\theta_n)>0$, so
\begin{equation}
    \tilde{C}(\theta_n) \le c_{\max}-\alpha \coloneqq c_{\max+}^\alpha.
\end{equation}
Combining this with Eq.~(\ref{eq_app:c_diff_decreasing}) gives
\begin{equation}
    \sum_{j=n}^{m-2} \eta_j^{\theta} \le \frac{4c_{\max+}^\alpha}{\Lambda_0\nu^2}.
\end{equation}
As a result,
\begin{equation}
    \sum_{j=n}^{m-1} \eta_j^{\theta} \le \frac{4c_{\max+}^\alpha}{\Lambda_0\nu^2} + \bar \eta^\theta,
\end{equation}
where $\bar \eta^\theta:=\sup_{j\ge n}\eta_j^{\theta}\to0.$
During this positive-residual interval,
\begin{align}
    \lambda_m-\lambda_n &= \sum_{j=n}^{m-1} \eta_j^{\lambda}\tilde{C}(\theta_j) \\
    &\le c_{\max+}^\alpha \sum_{j=n}^{m-1}\eta_j^{\lambda} \\
    &\le c_{\max+}^\alpha\bar\rho_n \sum_{j=n}^{m-1} \eta_j^{\theta} \\
    &\le c_{\max+}^\alpha\bar\rho_n \left( \frac{4c_{\max+}^\alpha}{\Lambda_0\nu^2} + \bar \eta^\theta \right).
\end{align}
By Eq.~(\ref{eq:step_cond_4}) the right-hand side tends to zero. Thus, an excursion above \(\Lambda_0\) can increase the multiplier only by a quantity tending to zero before the constraint becomes satisfied. Once \(\tilde{C}(\theta_m)\le0\), the preceding argument shows that \(\lambda_k\) cannot continue increasing while it stays above \(\Lambda_0\). Consequently, all sufficiently late excursions above \(\Lambda_0\) have uniformly bounded height. The finitely many earlier iterates also have a finite maximum. Hence,
\begin{equation}
    \sup_k\lambda_k<\infty.
\end{equation}

Then based on the dual recursion at Eq.~(\ref{eq_app:update_dual}), we have $\lambda_{k+1} \ge \lambda_k+\eta_k^{\lambda}\tilde{C}(\theta_k)$. Summing from an arbitrary iteration \(n\) to \(N-1\), where \(N>n\),
\begin{align}
    \sum_{k=n}^{N-1} \eta_k^{\lambda} \tilde{C}(\theta_k) &\le \lambda_N-\lambda_n \\
    &\le \Lambda_0.
\end{align}
Let $S_{n,N} \coloneq \sum_{k=n}^{N-1} \eta_k^{\lambda}$, then
\begin{equation}
    \frac{ \sum_{k=n}^{N-1} \eta_k^{\lambda} \tilde{C}(\theta_k) }{ S_{n,N} } \le \frac{\Lambda_0}{S_{n,N}}.
\end{equation}
The minimum of finitely many numbers is no greater than their weighted average, so
\begin{equation}
    \min_{n\le k<N} \tilde{C}(\theta_k) \le \frac{\Lambda_0}{S_{n,N}}.
\end{equation}
By Eq.~(\ref{eq:step_cond_3}), for every fixed \(n\), \(S_{n,N}\to\infty\) as \(N\to\infty\). Hence,
\begin{equation}
    \limsup_{N\to\infty} \min_{n\le k<N} \tilde{C}(\theta_k) \le0,
\end{equation}
and equivalently,
\begin{equation}
    \liminf_{k\to\infty} \bigl[C(\theta_k)-\alpha\bigr] \le0
\end{equation}
Formally,
\begin{equation}
    \forall\varepsilon>0,\ \forall n,\ \exists k\ge n: C(\theta_k)\le\alpha+\varepsilon.
\end{equation}
Therefore, for every tolerance \(\varepsilon>0\) and every arbitrarily late iteration index \(n\), there exists some \(k\ge n\) such that $C(\theta_k)\le\alpha+\varepsilon$ and the theorem is proved.

\end{proof}

\end{document}